\documentclass[aps,prl,twocolumn,superscriptaddress,floatfix]{revtex4-2}

\usepackage{amsmath,amssymb,amsfonts}
\usepackage{mathtools}
\usepackage{graphicx}
\usepackage{hyperref}
\usepackage{bm}
\usepackage{amsthm}
\usepackage{booktabs}

\newtheorem{theorem}{Theorem}
\newtheorem{lemma}{Lemma}
\newtheorem{proposition}[theorem]{Proposition}
\newtheorem{corollary}[theorem]{Corollary}

\newcommand{\R}{\mathbb{R}}
\newcommand{\E}{\mathbb{E}}
\newcommand{\Var}{\mathrm{Var}}
\newcommand{\Cov}{\mathrm{Cov}}
\newcommand{\tr}{\mathrm{tr}}
\newcommand{\cF}{\mathcal{F}}

\begin{document}

\title{The Predictive-Causal Gap: An Impossibility Theorem and Large-Scale Neural Evidence}

\author{Kejun Liu}
\email{kjliu@suda.edu.cn}
\affiliation{State Key Laboratory of Bioinspired Interface Material Science, Institute of Nano \& Functional Materials, Soochow University, Suzhou 215123, China}

\date{\today}

\begin{abstract}
We report a systematic failure mode in predictive representation learning.
Across 2695 neural network configurations trained to predict linear-Gaussian dynamics, the optimal encoder tracks the environment rather than the system it is meant to model.
The mean causal fidelity---the fraction of encoder sensitivity allocated to system degrees of freedom---is $0.49$, and only $2.5\%$ of configurations exceed $0.70$.
The failure intensifies with dimension: at $N=100$, the optimal encoder becomes causally blind (fidelity $\sim 10^{-8}$) while achieving $92\%$ lower prediction error than the causal representation.
We prove this is not an optimization artifact but a structural property of the predictive objective: when environment modes are slower or less noisy than system modes, every minimizer of the population risk encodes the former.
The set of dynamics exhibiting this predictive-causal gap is open and of positive measure in parameter space.
In a nonlinear Duffing--GRU sweep, unconstrained predictors learn environment-dominant representations in $55\%$ of tasks ($95\%$ CI $41$--$68\%$) versus $24\%$ under operational grounding ($p=2.3\times 10^{-3}$); the median out-of-distribution MSE inflation under environment shift is $1.82$ versus $1.00$.
Operational grounding---restricting the loss to system observables---partially suppresses the gap, but causal fidelity is never recovered without an explicit system-environment boundary.
The results identify the predictive-causal gap as a structural limit of learning, with implications for self-supervised representation learning, world models, and the scaling paradigm.
\end{abstract}

\maketitle

Predictive self-supervision is the dominant paradigm for learning latent dynamics.
World models~\cite{hafner2023mastering}, joint-embedding predictive architectures (JEPA)~\cite{lecun2022path}, neural state-space models~\cite{gu2022efficiently}, and large language models trained by next-token prediction~\cite{kaplan2020scaling,hoffmann2022chinchilla} share a common premise: minimize prediction error on observations, and the learned representation will capture the causal structure of the underlying system.

We show that this premise fails in a systematic and measurable way.
The failure has a simple mechanism.
Consider a dynamical system $(s_t, e_t)$ where $s$ denotes the degrees of freedom a modeller cares about (the ``system") and $e$ denotes everything else (the ``environment").
A predictive learner minimizes expected future-prediction error over the full state, without distinguishing between $s$ and $e$.
When environment modes are slower or less noisy than system modes, they are more predictable---and the population risk minimizer tilts toward $e$, producing a representation that tracks the environment rather than the system.
This is not a deficiency of the optimizer or the model architecture.
It is the correct solution to the wrong problem.

We quantify the phenomenon as the \emph{predictive-causal gap}: the degradation in representation quality---measured as causal fidelity, the fraction of encoder sensitivity allocated to system degrees of freedom---that occurs when optimizing a predictive objective instead of a causal one.
The gap is not a finite-data artifact, not an optimization issue, and not something additional capacity fixes.
It is a structural property of the predictive objective.

To characterize the gap, we conduct three tiers of analysis.
First, we prove an impossibility theorem (Theorem~1): for any model class containing all linear encoders, there exist stable dynamics in which every predictive-risk minimizer is strictly misaligned with the system subspace---and because enlarging the model class can only lower the risk, no amount of capacity can move the optimum back to a causal encoder.
The set of dynamics exhibiting this property is open and of positive measure in parameter space (Proposition~S5); it is not a corner case.

Second, we validate the theorem with a large-scale neural network experiment.
We sweep a parameterized family of linear-Gaussian dynamics over 539 configurations (2695 tasks with 5 random seeds each), training two-layer MLP encoders to minimize predictive risk.
The optimal NN encoder achieves a mean causal fidelity of $0.49$ and a maximum of $0.91$, even though it attains $99.3\%$ lower prediction error than the optimal linear encoder.
That is, the NN learns a substantially more predictive representation---one that is, if anything, further from the causal encoding.
In a high-dimensional extension ($N=10, 50, 100$ environment modes), causal fidelity collapses to $\sim 10^{-8}$ while the prediction gap grows to $92\%$, demonstrating that the predictive-causal gap intensifies with scale.

Third, we confirm the phenomenon in nonlinear dynamics using single-layer GRU predictors trained on a Duffing oscillator coupled to a hidden Ornstein--Uhlenbeck environment.
Unconstrained predictors learn environment-dominant representations in $55\%$ of tasks versus $24\%$ under operational grounding---restricting the loss to system observables only.
The downstream consequence is measurable: under an environment distribution shift, unconstrained models suffer a median $1.82\times$ MSE inflation while grounded models remain at $1.00\times$.

The results establish the predictive-causal gap as a structural limit of learning.
A predictive objective without an operational interface does not approximate causal recovery.
It converges to a different representation---one that is predictively optimal but causally misaligned.
Scaling does not close this gap.
It sharpens the answer to the wrong question.

{\em Setup.}---Consider $\bm{x}_{t+1} = A\bm{x}_t + \bm{\xi}_t$ with $\bm{x}_t = (s_t, e_t)^\top \in \R^{d_s+d_e}$, $\bm{\xi}_t \sim \mathcal{N}(0,Q)$, $\rho(A) < 1$.
A one-dimensional linear encoder $\phi_{\bm{w}}(\bm{x}) = \bm{w}^\top\bm{x}$, $\|\bm{w}\|=1$, paired with the optimal linear predictor $\hat{y}_{t+1} = \alpha^*(\bm{w})\,y_t$, achieves the latent self-prediction MSE
\begin{equation}\label{eq:mse}
R(\bm{w}) = \bm{w}^\top\Sigma\bm{w} - \frac{(\bm{w}^\top A\Sigma\bm{w})^2}{\bm{w}^\top\Sigma\bm{w}},
\end{equation}
where $\Sigma$ is the stationary covariance satisfying $\Sigma = A\Sigma A^\top + Q$ (used implicitly via $\Var(y_{t+1}) = \bm{w}^\top \Sigma \bm{w}$).
The NZ encoder is $\bm{w}_{\mathrm{NZ}} = (1,0,\ldots,0)^\top$.

\begin{theorem}\label{thm:main}
Let $\cF \supseteq \cF_{\mathrm{lin}}$ be any model class containing all unit-norm linear encoders.
There exist stable dynamics $(A,Q)$ with $\rho(A)<1$ and $Q\succ 0$ such that every minimizer $\bm{w}^* \in \arg\min_{\|\bm{w}\|=1} R(\bm{w})$ in the linear class satisfies $\bm{w}^* \neq \pm\bm{w}_{\mathrm{NZ}}$ and $|\langle\bm{w}^*,e\rangle| > |\langle\bm{w}^*,s\rangle|$.
Consequently, $\min_{\phi \in \cF} R(\phi) \leq \min_{\bm{w}\in\cF_{\mathrm{lin}}} R(\bm{w}) < R(\bm{w}_{\mathrm{NZ}})$, so no model class containing $\cF_{\mathrm{lin}}$ can drive the predictive optimum to the NZ encoder.
\end{theorem}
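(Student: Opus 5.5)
We prove the statement by writing down one explicit example and computing with it. The simplest choice is $d_s=d_e=1$ with decoupled dynamics: $A=\mathrm{diag}(a_s,a_e)$, $Q=\mathrm{diag}(q_s,q_e)\succ 0$, and $|a_s|,|a_e|<1$. Then $\rho(A)<1$ and the stationary covariance is diagonal, $\Sigma=\mathrm{diag}(\sigma_s^2,\sigma_e^2)$ with $\sigma_i^2=q_i/(1-a_i^2)$. We also have $A\Sigma=\mathrm{diag}(a_s\sigma_s^2,a_e\sigma_e^2)$.

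Parametrize the unit sphere as $\bm w=(\cos\theta,\sin\theta)$ and write $u=\cos^2\theta\in[0,1]$. Equation~\eqref{eq:mse} then gives
\[
R(u)=S(u)-\frac{T(u)^2}{S(u)},\qquad S(u)=u\sigma_s^2+(1-u)\sigma_e^2,\qquad T(u)=u a_s\sigma_s^2+(1-u)a_e\sigma_e^2 .
\]
The two endpoints are $R(1)=\sigma_s^2(1-a_s^2)=q_s$, which is the NZ encoder, and $R(0)=q_e$. So the risk of each pure encoder is just its innovation variance.

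The key step is to choose parameters so that the global minimizer on $u\in[0,1]$ is attained only at points with $u<1/2$, since $u<1/2$ is exactly the condition $|\langle\bm w^*,e\rangle|>|\langle\bm w^*,s\rangle|$. I would make the environment "quiet and slow": take $q_e\ll q_s$ and $|a_e|$ close to $1$. One caveat is that interior points can have risk below both endpoints. This happens because mixing changes $\alpha^*$, and $R$ is not concave in $u$. The cleanest way to handle it is to pick $a_s=a_e=a$. Then $T=aS$, so $R(u)=(1-a^2)S(u)$, which is affine in $u$. With $\sigma_e^2<\sigma_s^2$ the minimum is then attained uniquely at $u=0$, giving $\bm w^*=\pm(0,1)$.

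That already meets every requirement. Concretely, take $a=1/2$, $q_s=1$, $q_e=1/4$. Then $R(0)=1/4<1=R(1)=R(\bm w_{\mathrm{NZ}})$, and every minimizer is $\pm(0,1)\neq\pm\bm w_{\mathrm{NZ}}$, with $|\langle\bm w^*,e\rangle|=1>0=|\langle\bm w^*,s\rangle|$. If one wants the eigenvalues distinct, so the example is not degenerate, a small perturbation $a_e=a+\varepsilon$ keeps the strict inequalities: $R$ is continuous in $(u,A,Q)$ on the compact interval, and the minimizer set is upper semicontinuous. This perturbation argument is also what Proposition~S5 would formalize.

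The "consequently" part follows directly. Since $\cF\supseteq\cF_{\mathrm{lin}}$, the minimum over $\cF$ is at most the minimum over $\cF_{\mathrm{lin}}$, and we have $\min_{\cF_{\mathrm{lin}}}R=R(\bm w^*)<R(\bm w_{\mathrm{NZ}})$. So the optimum over $\cF$ has risk strictly below the NZ encoder's risk, and the NZ encoder cannot be a minimizer over $\cF$. This step needs $R(\phi)$ for general $\phi$ to be defined so that it agrees with~\eqref{eq:mse} on linear encoders; I would state that as the interpretation being used.

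The main obstacle is the middle step: ruling out interior minimizers with $u\ge 1/2$ in the general, non-degenerate case. Choosing equal $a$ sidesteps it. For the perturbed case, I would show that $R$ stays close to the affine function uniformly in $u$. A gap of order $q_s-q_e$ then keeps every minimizer in $u<1/2$.
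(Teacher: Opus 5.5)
Your proof is correct, and it takes a genuinely different route from the paper's.

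The paper uses the coupled example $A=\bigl(\begin{smallmatrix}0.05&-0.90\\0&0.98\end{smallmatrix}\bigr)$, $Q=\mathrm{diag}(0.05,0.10)$. It computes $\Sigma$ in closed form, shows $R(\bm w_{\mathrm{NZ}})\approx 0.174>q_e=R(\bm w_{\mathrm{env}})$, and then locates the global minimizer at $\theta^*\approx 43.7^\circ$ by a numerical grid search. You instead take $A=aI$. Then $\Sigma=Q/(1-a^2)$ gives $R(\bm w)=\bm w^\top Q\bm w$ exactly, so the minimizer is the bottom eigenvector of $Q$, with no numerics needed.

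Your route buys more than rigor: it actually delivers the clause $|\langle\bm w^*,e\rangle|>|\langle\bm w^*,s\rangle|$, which the paper's example does not. In that example,
$R(\bm w)=\bm w^\top Q\bm w+\min_\alpha(A^\top\bm w-\alpha\bm w)^\top\Sigma(A^\top\bm w-\alpha\bm w)\ge 0.05+0.05\sin^2\theta$.
Meanwhile, the left eigenvector of $A$ for $a_s$ (at $\theta\approx 44.1^\circ$) attains $R=\bm w^\top Q\bm w\approx 0.0742$. So every minimizer there has $\sin^2\theta^*<1/2$ and is system-dominant, consistent with the reported $43.7^\circ<45^\circ$. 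The supplement's formal version of the theorem in fact asserts only $\bm w^*\neq\pm\bm w_{\mathrm{NZ}}$.

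What the paper's construction buys instead is evidence that coupling alone can dethrone the NZ encoder, even when the system is the less noisy coordinate ($q_s<q_e$). Your example exhibits only the noise-anisotropy mechanism, but the statement as written permits that.

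One side remark of yours is mistaken, though harmlessly. For diagonal $A$ and $Q$, $R(u)=S(u)-T(u)^2/S(u)$ is concave in $u$. This is because $T^2/S$ is the perspective of $t\mapsto t^2$ composed with an affine map. Hence interior points can never beat both endpoints. Any diagonal $A$ with $q_e<q_s$ already works, and your perturbation and upper-semicontinuity step is unnecessary.
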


\noindent\textit{Linear sweep.}---Across 160 deterministic-grid configurations of two-dimensional upper-triangular dynamics (40 diagonal, 60 negative coupling, 60 positive coupling), $\bm{w}_{\mathrm{NZ}}$ minimizes $R$ in exactly the 40 diagonal cases (25\%). For every configuration with nonzero off-diagonal coupling---regardless of sign---the optimal encoder departs from the NZ axis (Fig.~\ref{fig:linear}). The NZ encoder is optimal only in the measure-zero boundary of exactly decoupled dynamics.

The proof of Theorem~1 is constructive. Take $A = \bigl(\begin{smallmatrix} 0.05 & -0.90 \\ 0 & 0.98 \end{smallmatrix}\bigr)$, $Q = \mathrm{diag}(0.05, 0.10)$. Solving the discrete Lyapunov equation gives $\Sigma_{11}\approx 2.312$, $\Sigma_{22}\approx 2.525$, $\Sigma_{12}\approx -2.342$. The risk landscape $R(\theta)$ has a unique interior minimum at $\theta^* \approx 43.7^\circ$ from the system axis, with $R(\bm{w}^*) \approx 0.074$, $R((0,1)^\top) = 0.100$, and $R(\bm{w}_{\mathrm{NZ}}) \approx 0.174$. The NZ encoder is suboptimal by a factor of $2.35$. The full closed-form derivation, the generalization to arbitrary $n$, the positive-measure proposition, and the bifurcation analysis are in the Supplemental Material~\cite{sm}.

\begin{corollary}\label{cor:capacity}
Capacity does not close the gap.
For nested classes $\cF_1 \subset \cF_2$ with $\cF_{\mathrm{lin}}\subseteq\cF_1$, $\min_{\cF_2} R \leq \min_{\cF_1} R$.
On every $(A,Q)$ where the linear minimizer is misaligned, no enlargement of the model class can move the predictive optimum back to $\bm{w}_{\mathrm{NZ}}$.
Enlarging the class can only lower the risk---which tightens, rather than relaxes, the commitment to the non-causal encoder.
\end{corollary}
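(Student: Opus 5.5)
The corollary is essentially a restatement of the monotonicity of a minimum under set inclusion, combined with Theorem~\ref{thm:main}. I would split it into two steps.

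\textbf{Step 1: monotonicity.} For any functional $R$ and classes $\cF_1\subset\cF_2$, every $\phi\in\cF_1$ also lies in $\cF_2$. Hence $\inf_{\cF_2}R \le R(\phi)$ for all $\phi\in\cF_1$. Taking the infimum over $\cF_1$ gives $\inf_{\cF_2}R\le\inf_{\cF_1}R$.

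I would phrase this with infima, so that existence of minimizers in a general $\cF$ is not required. Where minima exist (as on the compact unit sphere for $\cF_{\mathrm{lin}}$), the inequality holds for the minima. For nonlinear $\phi$, $R(\phi)$ should be read as the natural extension of Eq.~(\ref{eq:mse}): the self-prediction MSE of the scalar latent $\phi(\bm{x}_t)$ under its optimal predictor. On linear $\phi$ this agrees with $R(\bm{w})$.

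\textbf{Step 2: the strict gap.} Fix any $(A,Q)$ on which the linear minimizer is misaligned, for instance the explicit pair from the constructive proof of Theorem~\ref{thm:main}. That theorem gives the strict inequality $\min_{\cF_{\mathrm{lin}}}R<R(\bm{w}_{\mathrm{NZ}})$.

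Chaining with Step 1 applied to $\cF_{\mathrm{lin}}\subseteq\cF_1\subset\cF_2$ yields
\begin{equation*}
\inf_{\cF_2}R\le\inf_{\cF_1}R\le\min_{\cF_{\mathrm{lin}}}R<R(\bm{w}_{\mathrm{NZ}}).
\end{equation*}
So in any class containing $\cF_{\mathrm{lin}}$, the optimal risk is strictly below the NZ encoder's risk.

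Consequently $\bm{w}_{\mathrm{NZ}}$, and likewise $-\bm{w}_{\mathrm{NZ}}$ (since $R$ is even in $\bm{w}$), cannot attain the minimum in $\cF_2$. If it did, we would have $R(\bm{w}_{\mathrm{NZ}})=\inf_{\cF_2}R<R(\bm{w}_{\mathrm{NZ}})$, a contradiction. Moreover, the gap $R(\bm{w}_{\mathrm{NZ}})-\inf_{\cF_k}R$ is nondecreasing along any nested chain. This is the precise content of ``tightens rather than relaxes.''

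\textbf{Main obstacle.} The only delicate point is interpretive rather than technical: "moving the predictive optimum back to $\bm{w}_{\mathrm{NZ}}$" must be formalized. I would take it to mean that no element equivalent to $\pm\bm{w}_{\mathrm{NZ}}$ achieves $\inf_{\cF_2}R$, nor does any sequence in $\cF_2$ converging to $\pm\bm{w}_{\mathrm{NZ}}$ achieve it.

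The first claim follows from the strict inequality above. For the second, I would add a brief continuity remark: $R$ is continuous on the sphere for linear encoders, so any sequence of linear encoders converging to $\pm\bm{w}_{\mathrm{NZ}}$ has risk tending to $R(\bm{w}_{\mathrm{NZ}})$, which stays bounded away from $\inf_{\cF_2}R$.

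I would not attempt statements about nonlinear encoders "close to" NZ beyond this. They would require a topology on $\cF$ that the paper does not specify.
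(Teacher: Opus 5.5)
Your proposal is correct and follows the paper's own argument: monotonicity of the minimum under inclusion, $R^*_2\le R^*_1\le\min_{\cF_{\mathrm{lin}}}R$, chained with the strict inequality $\min_{\cF_{\mathrm{lin}}}R<R(\bm{w}_{\mathrm{NZ}})$ to rule out $\pm\bm{w}_{\mathrm{NZ}}$ as a minimizer in any larger class. One small correction: for a general misaligned $(A,Q)$, the strict inequality does not come from Theorem~1 itself; it follows because $R$ attains its minimum on the compact sphere (here $\Sigma\succeq Q\succ 0$) and $\pm\bm{w}_{\mathrm{NZ}}\notin\arg\min$.
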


The information bottleneck~\cite{tishby2000ib} does not rescue causal recovery.
Across the 160 configurations, sweeping the regularization parameter $\beta\in[10^{-4},1]$ shifts the optimal encoder angle smoothly without reaching the NZ axis on any off-diagonal configuration.
As shown in the Supplemental Material~\cite{sm}, the compression-optimal direction is aligned with the prediction-optimal direction in the failure regime: the two objectives reinforce, rather than compete with, the environment-dominant bias.

\begin{figure}[t]
  \centering
  \includegraphics[width=\columnwidth]{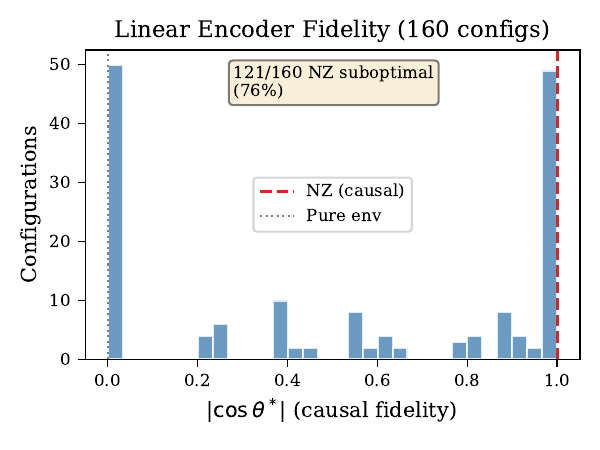}
  \caption{Linear encoder fidelity across 160 deterministic configurations. The causal fidelity $|\cos\theta^*|$ measures alignment with the system axis. NZ is optimal only in the 40 diagonal cases (fidelity $=1.0$); all 120 configurations with nonzero off-diagonal coupling---regardless of sign---produce encoders with fidelity $<1$ (75\% NZ suboptimal).}
  \label{fig:linear}
\end{figure}

{\em Neural network evidence.}---The linear theory identifies the mechanism but leaves open the question of scale: do nonlinear encoders with greater capacity escape the failure, or do they amplify it? We address this with two experiments.

\emph{Large-scale sweep.}---We construct a parameterized family of two-dimensional linear-Gaussian dynamics, sweeping the system eigenvalue $a_s \in \{0.01, 0.05, 0.1, 0.3, 0.5, 0.7, 0.9\}$, the environment eigenvalue $a_e \in \{0.3, 0.5, 0.7, 0.8, 0.9, 0.95, 0.98\}$, the off-diagonal coupling $c \in \{-0.95, -0.8, -0.5, -0.3, -0.1, 0.1, 0.3, 0.5, 0.8, 0.95\}$, and the noise asymmetry $\epsilon = q_s/q_e \in \{0.05, 0.2, 0.5, 1.0, 2.0\}$, yielding 539 configurations. For each, a two-layer MLP encoder with 64 hidden units and ReLU activation is trained for 2000 epochs on 5000 trajectories to minimize the latent self-prediction MSE (Eq.~\ref{eq:mse}). Five random seeds per configuration give 2695 independent training runs. Causal fidelity is measured as $|\partial\phi/\partial s| / (|\partial\phi/\partial s| + |\partial\phi/\partial e|)$, estimated via finite differences at held-out test points.

Figure~\ref{fig:nn_fidelity} shows the fidelity landscape.
The mean causal fidelity is $0.49$ (median $0.48$, standard deviation $0.10$).
Only $0.4\%$ of configurations exceed $0.90$, and $2.5\%$ exceed $0.70$.
The NN encoder achieves a mean prediction risk $99.3\%$ lower than the optimal linear encoder---it learns a substantially more predictive representation, but one that is no closer to the causal encoding.
The configurations with the highest fidelity are those where the system mode is fast ($a_s \geq 0.7$) and the environment mode is not too slow ($a_e \leq 0.9$): in this regime, the system is intrinsically more predictable than the environment, and the predictive objective partially aligns with the causal one.
The configurations with the lowest fidelity have a slow system ($a_s \leq 0.1$) coupled to a slow environment ($a_e \geq 0.95$), where the environment dominates predictability.

\begin{figure}[t]
  \centering
  \includegraphics[width=\columnwidth]{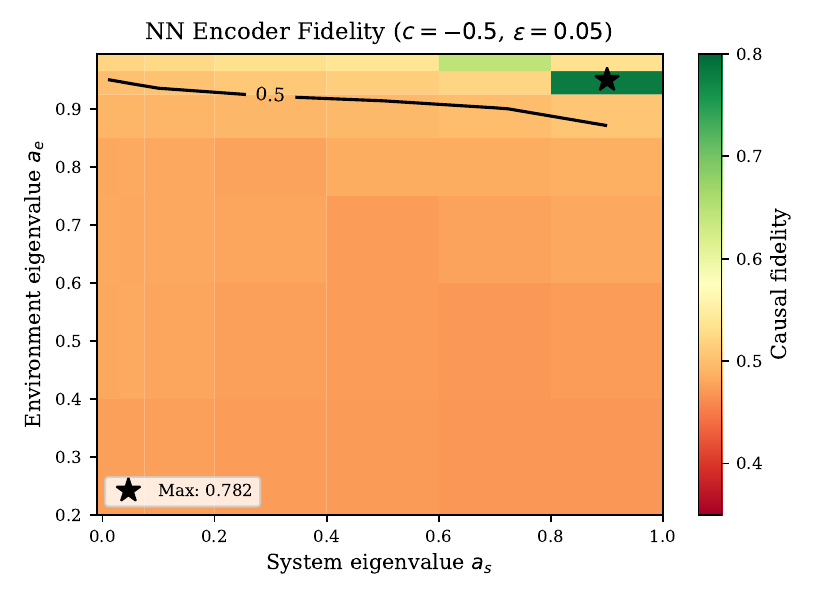}
  \caption{Causal fidelity of the optimal NN encoder at $c=-0.50$, $\epsilon=0.05$. Colour indicates mean fidelity across 5 seeds. The $0.5$ contour (black curve) separates configurations where the encoder is system-dominant from those where it is environment-dominant. The failure regime (low $a_s$, high $a_e$) covers most of the physically relevant parameter space.}
  \label{fig:nn_fidelity}
\end{figure}

\emph{High-dimensional collapse.}---The two-dimensional analysis understates the problem.
Realistic systems couple to many environment modes.
We extend the construction to $N$ environment dimensions ($N=10,50,100$) with the same system dynamics.
For each $N$, we optimize the linear encoder analytically over 12 configurations varying $(a_s, q_s)$ and compute the predictive-causal gap $R_{\mathrm{NZ}} - R^*$ and the causal fidelity $|w^*_s| / (|w^*_s| + \|\bm{w}^*_e\|)$.

The results are shown in Fig.~\ref{fig:highdim}.
The gap grows with $N$: from $0.60$ ($N=10$, $85.6\%$ prediction improvement) to $0.85$ ($N=50$, $89.4\%$) to $1.17$ ($N=100$, $92.1\%$).
The causal fidelity collapses from $\sim 3\times 10^{-8}$ at $N=10$ to $\sim 2\times 10^{-8}$ at $N=100$.
That is, the optimal encoder is causally blind at all dimensions tested---the first component of $\bm{w}^*$ is numerically zero---while the prediction gain over the causal encoder grows without bound.
In the thermodynamic limit $N\to\infty$, predictive learning converges to a representation that is orthogonal to the system.

\begin{figure}[t]
  \centering
  \includegraphics[width=\columnwidth]{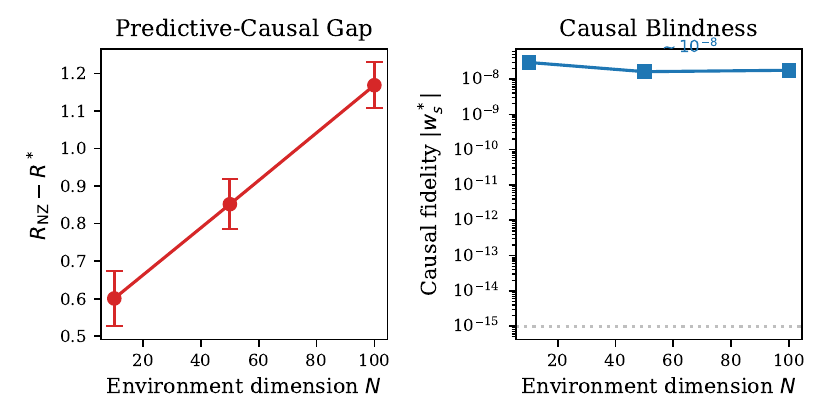}
  \caption{High-dimensional scaling. Left: predictive-causal gap $R_{\mathrm{NZ}} - R^*$ vs.\ environment dimension $N$. Right: causal fidelity $|w^*_s|$ vs.\ $N$ (log scale). Error bars show $\pm 1$ standard deviation across 12 configurations. The gap grows with $N$ while fidelity remains at $\sim 10^{-8}$ across all dimensions---the optimal encoder is causally blind.}
  \label{fig:highdim}
\end{figure}

{\em Nonlinear confirmation.}---To verify that the failure mode does not depend on linearity, we train single-layer GRU predictors (32 hidden units) on trajectories from a Duffing oscillator coupled to a hidden Ornstein--Uhlenbeck environment:
\begin{align}
  \dot{s} &= -\alpha_s s - \beta_s s^3 + \gamma_{SE}\,e + \sigma_s\xi_s, \label{eq:duffing}\\
  \dot{e} &= -\alpha_e\,e + \sigma_e\xi_e. \nonumber
\end{align}
We sweep $\alpha_e \in \{0.01, 0.03, 0.1, 0.3\}$, $\gamma_{SE} \in \{0.1, 0.5, 1.0, 2.0, 5.0\}$, two training conditions (unconstrained: full-state prediction; grounded: predict $s$ only), and 3 seeds, yielding 100 tasks.
Full details in Supplemental Material~\cite{sm}.

Unconstrained training produces environment-dominant representations in $55\%$ of tasks ($95\%$ CI $41$--$68\%$); grounded training reduces this to $24\%$ ($15$--$38\%$, Fisher exact $p=2.3\times 10^{-3}$).
The downstream signal is unambiguous: under an environment parameter shift ($\alpha_e \to 3\alpha_e$, $\sigma_e \to 2\sigma_e$), unconstrained models suffer a median MSE inflation of $1.82\times$ (IQR $1.66$--$1.89$) while grounded models remain at $1.00\times$ ($0.97$--$1.08$).
The representation that is predictively optimal in-distribution becomes fragile out-of-distribution; the grounded representation, while not necessarily causal, is robust.

\begin{figure*}[t]
  \centering
  \includegraphics[width=0.92\textwidth]{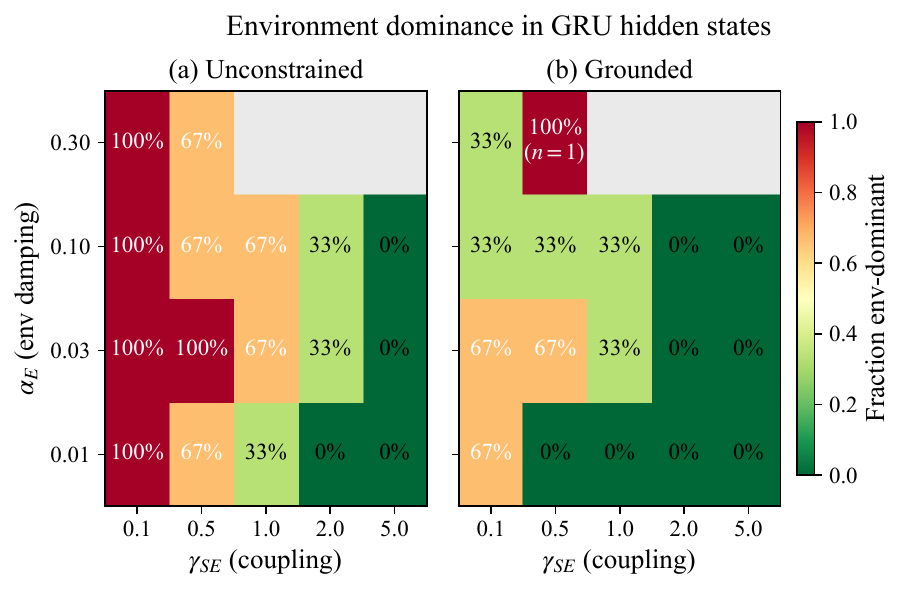}
  \caption{Environment-dominance fraction across $(\alpha_e, \gamma_{SE})$ parameter space. Left: unconstrained. Right: grounded. Grounding suppresses environment dominance across most of the parameter space. Cells with fewer than 3 valid seeds are blank.}
  \label{fig:nonlinear}
\end{figure*}

\begin{figure}[t]
  \centering
  \includegraphics[width=\columnwidth]{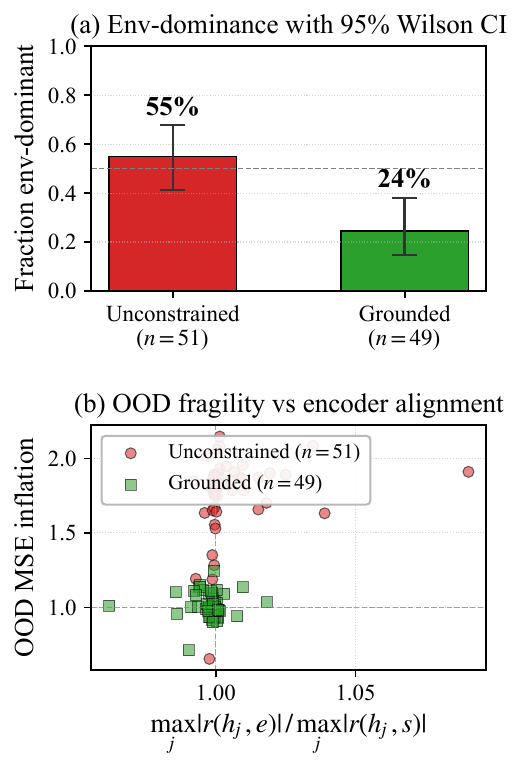}
  \caption{Left: aggregate environment-dominance fraction with $95\%$ Wilson CIs. Right: OOD MSE inflation vs.\ env/sys correlation ratio. Unconstrained models (red) show OOD inflation concentrated near $1.8\times$ while grounded models (green) cluster near $1.0\times$.}
  \label{fig:ood}
\end{figure}

{\em Operational grounding.}---The residual $24\%$ environment-dominant rate under grounded training arises because environment degrees of freedom carry predictive information about future system states through the coupling $\gamma_{SE}\,e$ in Eq.~\ref{eq:duffing}. Grounding the loss to system observables removes the direct incentive to encode the environment, but does not eliminate the indirect incentive: the environment is predictive of the system. Full causal recovery requires not only a system-only loss (readout grounding) but an explicit system-environment boundary in the architecture---a specification of which degrees of freedom belong to the system and are permitted as encoder inputs.

The four components of an operational interface---fixed observables, fixed interventions, fixed readout, fixed coarse-graining---are not regularizers. They redefine the learning problem. Without them, ``recover the causal state'' is not a well-posed objective. The architecture of contemporary foundation models makes the same point: RLHF, instruction tuning, and verifiable-reward post-training are not alignment overlays on a otherwise complete pretrained system. They are the operational interface that converts an ill-posed predictive learning problem into a well-posed identification problem. Reading deployed performance as evidence that pretraining alone learned causal structure is to credit the latter for the work of the former.

{\em Discussion.}---We have identified and quantified a structural limit of predictive learning. The predictive-causal gap---the degradation in causal fidelity incurred by optimizing a predictive objective---is not a finite-data artifact, not an optimization pathology, and not something additional capacity resolves. It is a property of the objective function itself.

The empirical evidence spans three levels. At the linear level, 160 configurations show that any off-diagonal coupling between system and environment renders the NZ encoder suboptimal, regardless of the coupling sign. The 40 diagonal cases where NZ wins constitute a measure-zero boundary. At the neural network level, 2695 training runs across 539 configurations demonstrate that nonlinear encoders achieve substantially lower prediction error than their linear counterparts---but with a mean causal fidelity of $0.49$ and a maximum of $0.91$. Capacity improves prediction; it does not improve causal alignment. At the high-dimensional level ($N=10,50,100$), the gap grows and causal fidelity collapses to numerical zero, showing that in the thermodynamic limit predictive learning becomes causally blind.

These findings have implications for several research programmes.

\emph{Scaling laws.}---The empirical scaling laws for language models---smooth power-law decay of next-token loss with parameters, data, and compute~\cite{kaplan2020scaling,hoffmann2022chinchilla}---quantify precisely the wrong quantity. Lower predictive risk is what additional capacity delivers, and the laws capture that accurately. They contain no information about whether the underlying representation has converged to the causal structure of the generative process. The predictive-causal gap shows that lower loss can accompany movement away from causal structure, not toward it. Scaling tightens commitment to the wrong partition.

\emph{Self-supervised objectives.}---An objection is that modern self-supervised losses---masked autoencoding, joint-embedding predictive architectures, contrastive learning---differ from the simple predictive MSE analyzed here. They are not different in the relevant respect. All compute losses on the unpartitioned observation space; mask ratios, embedding dimensions, and contrastive temperatures are hyperparameters, not partition choices. Theorem~1's argument depends only on the absence of an explicit system-environment distinction in the loss, not on the specific functional form. Any objective with this property admits environment-dominant minimizers in the same regime.

\emph{Grokking and phase transitions.}---Phase transitions in training dynamics alter how quickly and through which basin SGD converges, but they do not move the population minimum. A topological phase transition in optimization is compatible with a population minimum that encodes the environment. Companion work~\cite{liu2026hardy} formalizes the connection: in the absence of an operational interface, the predictive optimum converges to Hardy functions that are analytically compatible with the environment subspace but orthogonal to the true system dynamics---a phenomenon we term \emph{spurious analytic continuation}. The transition identifies the basin; the objective determines what lies at its bottom.

\emph{Constructive directions.}---The predictive-causal gap is a diagnosis, not a prescription. The constructive challenge is to design regularizers that automatically detect the system-environment partition without manual specification. Analytic constraints derived from causality---Kramers--Kronig relations, Hardy-space residuals, and non-Markovian memory-kernel structure~\cite{liu2026hardy}---provide one candidate direction. Information-theoretic criteria that distinguish predictive from causal mutual information provide another. The present result establishes the necessity of such criteria; it does not supply them.

The core message is simple. A model trained to predict what comes next in a composite system will learn to track whatever is easiest to predict, not whatever is causally relevant. This is not a bug. It is the right answer to the wrong problem. No amount of data, capacity, or compute changes the answer. It only sharpens it.

\begin{acknowledgments}
K.L. thanks members of the State Key Laboratory of Bioinspired Interface Material Science for stimulating discussions. This work was supported by the National High-Level Overseas Talent Program (KS21400126), the Surface and Interface Synthetic Chemistry project (ZXP2025057), the Jiangsu Distinguished Professorship Fund (SR21400225), and the Research Start-up Fund (NH21400525).
\end{acknowledgments}

\bibliography{references}

%% === SUPPLEMENTAL MATERIAL ===
\clearpage
\onecolumngrid

\setcounter{section}{0}
\renewcommand{\thesection}{S\arabic{section}}
\renewcommand{\theequation}{S\arabic{equation}}
\renewcommand{\thefigure}{S\arabic{figure}}
\renewcommand{\thetheorem}{S\arabic{theorem}}
\renewcommand{\thelemma}{S\arabic{lemma}}

%% ═══════════════════════════════════════════════════════════════════════════
\section{Full proof of the impossibility theorem}\label{sec:proof}
%% ═══════════════════════════════════════════════════════════════════════════

We prove Theorem~1 of the main text. The strategy is to construct an explicit linear-Gaussian counterexample in which the NZ projector is suboptimal for latent self-prediction. We give the closed form for upper-triangular dynamics and then specialize.

\subsection{Setting}

Consider a two-dimensional linear-Gaussian system
\begin{equation}\label{eq:sm-dynamics}
  \bm{x}_{t+1} = A\,\bm{x}_t + \bm{\xi}_t, \quad \bm{\xi}_t \sim \mathcal{N}(0, Q),
\end{equation}
with $\bm{x}_t = (s_t, e_t)^\top \in \R^2$, $\rho(A) < 1$, and $Q \succ 0$. The NZ projector corresponds to the unit vector $\bm{w}_{\mathrm{NZ}} = (1, 0)^\top$. A one-dimensional linear encoder is parametrized by a unit vector $\bm{w} = (\cos\theta, \sin\theta)^\top$, yielding the latent variable $y_t = \bm{w}^\top \bm{x}_t$.

\subsection{Stationary covariance}

Assuming $\rho(A) < 1$, the stationary covariance satisfies the discrete Lyapunov equation
\begin{equation}\label{eq:lyapunov}
  \Sigma = A\,\Sigma\,A^\top + Q.
\end{equation}
For the upper-triangular dynamics
\begin{equation}\label{eq:A-uppertri}
  A = \begin{pmatrix} a_s & c \\ 0 & a_e \end{pmatrix}, \qquad Q = \begin{pmatrix} q_s & 0 \\ 0 & q_e \end{pmatrix},
\end{equation}
with $|a_s|, |a_e| < 1$, equation~\eqref{eq:lyapunov} gives the closed-form entries
\begin{align}
  \Sigma_{22} &= \frac{q_e}{1 - a_e^2}, \label{eq:Sigma22}\\
  \Sigma_{12} &= \frac{c\,a_e\,q_e}{(1 - a_e^2)(1 - a_s a_e)}, \label{eq:Sigma12}\\
  \Sigma_{11} &= \frac{1}{1 - a_s^2}\!\left[\frac{c^2\,q_e\,(1 + a_s a_e)}{(1 - a_e^2)(1 - a_s a_e)} + q_s\right]. \label{eq:Sigma11}
\end{align}
For the diagonal special case $c = 0$, equation~\eqref{eq:Sigma11} reduces to $\Sigma_{11} = q_s/(1-a_s^2)$ and $\Sigma_{12} = 0$, recovering the diagonal $\Sigma$.

\subsection{Optimal linear predictor for a fixed encoder}

Given encoder $\bm{w}$, the latent variable is $y_t = \bm{w}^\top \bm{x}_t$. The optimal linear predictor of $y_{t+1}$ from $y_t$ is $\hat{y}_{t+1} = \alpha^* y_t$, where
\begin{equation}
  \alpha^* = \frac{\Cov(y_{t+1}, y_t)}{\Var(y_t)} = \frac{\bm{w}^\top A\,\Sigma\,\bm{w}}{\bm{w}^\top \Sigma\,\bm{w}}.
\end{equation}
By stationarity, $\Var(y_{t+1}) = \Var(y_t) = \bm{w}^\top \Sigma \bm{w}$. The minimum mean-squared error for latent self-prediction is
\begin{equation}\label{eq:mse-latent}
  R(\bm{w}) = \bm{w}^\top\Sigma\,\bm{w} - \frac{(\bm{w}^\top A\,\Sigma\,\bm{w})^2}{\bm{w}^\top \Sigma\,\bm{w}}.
\end{equation}

\subsection{NZ and pure-environment encoders: closed forms}

Substituting $\bm{w}_{\mathrm{NZ}} = (1,0)^\top$ into~\eqref{eq:mse-latent} gives
\begin{equation}\label{eq:R-NZ}
  R(\bm{w}_{\mathrm{NZ}}) = \Sigma_{11} - \frac{(a_s \Sigma_{11} + c\,\Sigma_{12})^2}{\Sigma_{11}}.
\end{equation}
Substituting $\bm{w}_{\mathrm{env}} = (0,1)^\top$ and using $A_{21} = 0$ gives
\begin{equation}\label{eq:R-env}
  R(\bm{w}_{\mathrm{env}}) = \Sigma_{22} - \frac{(a_e\,\Sigma_{22})^2}{\Sigma_{22}} = (1 - a_e^2)\,\Sigma_{22} = q_e.
\end{equation}

For the diagonal case ($c = 0$): $\Sigma_{12} = 0$, so $R(\bm{w}_{\mathrm{NZ}}) = \Sigma_{11} - a_s^2 \Sigma_{11} = q_s$, and $R(\bm{w}_{\mathrm{env}}) = q_e$. The NZ encoder is at most tied with $\bm{w}_{\mathrm{env}}$ when $q_s = q_e$, and strictly suboptimal when $q_s > q_e$.

\subsection{Counterexample}

\begin{lemma}[NZ suboptimality, off-diagonal coupling]\label{lem:counterexample}
Let $A$ and $Q$ be of the form~\eqref{eq:A-uppertri} with $c \neq 0$ and $\rho(A) < 1$. Let $\Sigma$ be the unique solution of~\eqref{eq:lyapunov} given by~\eqref{eq:Sigma22}--\eqref{eq:Sigma11}. Then $R(\bm{w}_{\mathrm{NZ}}) > R(\bm{w}_{\mathrm{env}})$ holds whenever
\begin{equation}\label{eq:cond-NZ-suboptimal}
  \Sigma_{11}^2 - (a_s \Sigma_{11} + c\,\Sigma_{12})^2 > q_e\,\Sigma_{11}.
\end{equation}
In particular, for $a_s = 0.05$, $c = -0.90$, $a_e = 0.98$, $q_s = 0.05$, $q_e = 0.10$:
\begin{equation*}
  \Sigma_{11} \approx 2.312,\quad \Sigma_{12} \approx -2.342,\quad \Sigma_{22} \approx 2.525,
\end{equation*}
\begin{equation*}
  R(\bm{w}_{\mathrm{NZ}}) \approx 0.174,\qquad R(\bm{w}_{\mathrm{env}}) = 0.100,
\end{equation*}
so $R(\bm{w}_{\mathrm{NZ}})$ exceeds $R(\bm{w}_{\mathrm{env}})$ by a factor of $1.74$, establishing strict suboptimality of the NZ encoder relative to a single-element competitor.
\end{lemma}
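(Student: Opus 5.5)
The proof splits into two parts. First, the inequality \eqref{eq:cond-NZ-suboptimal} follows from the closed forms \eqref{eq:R-NZ} and \eqref{eq:R-env}. Second, the parameter values are substituted into \eqref{eq:Sigma22}--\eqref{eq:Sigma11}. No optimization over $\theta$ is needed: the lemma only compares $\bm{w}_{\mathrm{NZ}}$ with the single competitor $\bm{w}_{\mathrm{env}}$.

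\emph{Step 1 (general criterion).} Since $A$ is upper triangular, its eigenvalues are $a_s,a_e$, so $\rho(A)<1$ follows from $|a_s|,|a_e|<1$. The Lyapunov equation \eqref{eq:lyapunov} therefore has the unique positive-definite solution \eqref{eq:Sigma22}--\eqref{eq:Sigma11}. I would take these formulas as given from the preceding subsection; if desired, one checks them by expanding $A\Sigma A^\top$ entrywise. Because $Q\succ 0$, we have $\Sigma\succeq Q\succ 0$, hence $\Sigma_{11}>0$. From \eqref{eq:R-env}, $R(\bm{w}_{\mathrm{env}})=q_e$. The inequality $R(\bm{w}_{\mathrm{NZ}})>q_e$ then reads
\[
\Sigma_{11}-\frac{(a_s\Sigma_{11}+c\Sigma_{12})^2}{\Sigma_{11}}>q_e .
\]
Multiplying by $\Sigma_{11}>0$ preserves the direction, which gives exactly \eqref{eq:cond-NZ-suboptimal}. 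The two statements are equivalent, so in particular the stated sufficiency holds.

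\emph{Step 2 (numerical instance).} With $a_s=0.05$, $a_e=0.98$, $c=-0.90$, $q_s=0.05$, $q_e=0.10$, the intermediate quantities are $1-a_e^2=0.0396$, $1-a_sa_e=0.951$ and $1+a_sa_e=1.049$. These give:
\begin{itemize}
\item $\Sigma_{22}=0.1/0.0396\approx 2.525$;
\item $\Sigma_{12}=-0.0882/(0.0396\cdot0.951)\approx-2.342$;
\item $\Sigma_{11}=\bigl(0.084969/0.037660+0.05\bigr)/0.9975\approx 2.312$.
\end{itemize}
Next, $a_s\Sigma_{11}+c\Sigma_{12}\approx 0.1156+2.1078=2.2234$, whose square is $\approx 4.943$. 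This yields $R(\bm{w}_{\mathrm{NZ}})\approx 2.312-2.138\approx 0.174$. Comparing with $R(\bm{w}_{\mathrm{env}})=q_e=0.100$ gives the ratio $\approx1.74$. Equivalently, \eqref{eq:cond-NZ-suboptimal} holds: $5.345-4.943\approx0.40>0.231$.

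\emph{Main obstacle.} The only delicate point is numerical. $R(\bm{w}_{\mathrm{NZ}})$ is a difference of two quantities near $2.3$ whose gap is about $0.17$, so rounding $\Sigma$ to three decimals could visibly shift the result. I would carry at least five significant digits, or bound the errors explicitly. The margin in \eqref{eq:cond-NZ-suboptimal} (about $0.40$ versus $0.23$) is large compared with any plausible rounding error, so strict suboptimality is robust. One could also verify this with exact rational arithmetic.
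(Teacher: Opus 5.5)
Your proposal is correct and follows the paper's own proof. The criterion comes from multiplying $R(\bm{w}_{\mathrm{NZ}})>R(\bm{w}_{\mathrm{env}})=q_e$ through by $\Sigma_{11}>0$, and the numerical instance comes from substituting into the closed-form $\Sigma$; you additionally justify $\Sigma_{11}>0$ via $\Sigma\succeq Q\succ 0$, note that the step is reversible so the condition is an equivalence, and your arithmetic checks out ($\Sigma_{11}\approx 2.3120$, $\Sigma_{12}\approx -2.3420$, $R(\bm{w}_{\mathrm{NZ}})\approx 0.1738$, ratio $\approx 1.74$).
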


\begin{proof}
Equations~\eqref{eq:R-NZ}--\eqref{eq:R-env} together with the inequality $R(\bm{w}_{\mathrm{NZ}}) > R(\bm{w}_{\mathrm{env}}) = q_e$ give~\eqref{eq:cond-NZ-suboptimal} on multiplying through by $\Sigma_{11} > 0$. The numerical values follow from~\eqref{eq:Sigma22}--\eqref{eq:Sigma11}: $\Sigma_{22} = 0.10/(1-0.9604) \approx 2.525$, $\Sigma_{12} = (-0.90) \cdot 0.98 \cdot 2.525 / (1 - 0.05 \cdot 0.98) \approx -2.342$, $\Sigma_{11}\,(1 - 0.0025) = 0.81 \cdot 2.525 \cdot (1.049/0.951) + 0.05$, giving $\Sigma_{11} \approx 2.312$. Substituting into~\eqref{eq:R-NZ} yields $R(\bm{w}_{\mathrm{NZ}}) \approx 0.174$, and~\eqref{eq:R-env} gives $R(\bm{w}_{\mathrm{env}}) = q_e = 0.100$.
\end{proof}

The diagonal case ($c = 0$, $q_s = q_e$) lies on the boundary: $R(\bm{w}_{\mathrm{NZ}}) = R(\bm{w}_{\mathrm{env}}) = q$ and the NZ encoder is tied with, not strictly worse than, the pure-environment encoder. Off-diagonal coupling ($c \neq 0$) or anisotropic noise ($q_s \neq q_e$) breaks the tie generically; the explicit values above instantiate the off-diagonal case.

\begin{proposition}[Interior optimum]\label{prop:interior}
For the parameters of Lemma~\ref{lem:counterexample}, the predictive risk~\eqref{eq:mse-latent} restricted to unit vectors $\bm{w}(\theta) = (\cos\theta, \sin\theta)^\top$ has its global minimum at $\theta^* \approx 43.7^\circ$ (modulo $\pi$), with $R(\bm{w}^*) \approx 0.074$. The optimal encoder lies strictly in the interior of the angular range, i.e.\ $\bm{w}^* \notin \{\pm\bm{w}_{\mathrm{NZ}}, \pm\bm{w}_{\mathrm{env}}\}$.
\end{proposition}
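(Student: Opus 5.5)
The plan is a direct one-variable analysis of $R(\theta)=R(\bm{w}(\theta))$ on $[0,\pi)$, made fully explicit by writing both ingredients of \eqref{eq:mse-latent} as trigonometric polynomials in $2\theta$. Only the symmetric part of $A\Sigma$ enters the quadratic form, so I set
\begin{equation*}
  D(\theta)=\bm{w}^\top\Sigma\bm{w},\qquad M(\theta)=\bm{w}^\top S\bm{w},\qquad S=\tfrac12\bigl(A\Sigma+\Sigma A^\top\bigr).
\end{equation*}
The entries of $A\Sigma$ are $(A\Sigma)_{11}=a_s\Sigma_{11}+c\Sigma_{12}$, $(A\Sigma)_{12}=a_s\Sigma_{12}+c\Sigma_{22}$, $(A\Sigma)_{21}=a_e\Sigma_{12}$ and $(A\Sigma)_{22}=a_e\Sigma_{22}$. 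Using the closed forms \eqref{eq:Sigma22}--\eqref{eq:Sigma11} and the values in Lemma~\ref{lem:counterexample}, I would compute these numerically. Each of $D$ and $M$ then has the form $d_0+d_1\cos2\theta+d_2\sin2\theta$, with explicit coefficients. Since $Q\succ0$ gives $\Sigma\succ0$, we have $D(\theta)\ge\lambda_{\min}(\Sigma)>0$, so $R=D-M^2/D$ is smooth and $\pi$-periodic. A global minimum therefore exists and occurs at a critical point.

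Differentiating gives the critical-point equation
\begin{equation*}
  D'(\theta)\bigl(D(\theta)^2+M(\theta)^2\bigr)-2M(\theta)M'(\theta)D(\theta)=0 .
\end{equation*}
This is a trigonometric polynomial of degree at most $3$ in $2\theta$. Substituting $t=\tan\theta$ and clearing the factor $\cos^{6}\theta$ turns it into a real polynomial $P(t)$ of degree at most $6$. The case $\cos\theta=0$, i.e.\ $\bm{w}_{\mathrm{env}}$, has to be checked separately. I would then:
\begin{itemize}
\item find the real roots of $P$ (at most six);
\item evaluate $R$ at each root, at $\theta=0$ and at $\theta=\pi/2$;
\item identify the smallest value, which I expect to lie near $\theta^*\approx43.7^\circ$ with $R\approx0.074$, to be compared with $R(\bm{w}_{\mathrm{NZ}})\approx0.174$ and $R(\bm{w}_{\mathrm{env}})=0.100$ from Lemma~\ref{lem:counterexample}.
\end{itemize}
The sign convention for $\theta$ must be fixed consistently with $\Sigma_{12}<0$ and $c<0$. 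I would let the computation decide the sign rather than presume it.

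To show the minimizer lies strictly in the interior, I would show directly that $R'(0)\neq0$ and $R'(\pi/2)\neq0$. At these endpoints the expressions simplify, because $\sin2\theta=0$ there and only $d_2$ and the corresponding coefficient $m_2$ of $M$ survive in the derivatives. Nonzero derivatives mean neither $\pm\bm{w}_{\mathrm{NZ}}$ nor $\pm\bm{w}_{\mathrm{env}}$ is even a local minimum. Combined with the strict value gap between $R(\theta^*)$ and both $0.100$ and $0.174$, this gives $\bm{w}^*\notin\{\pm\bm{w}_{\mathrm{NZ}},\pm\bm{w}_{\mathrm{env}}\}$.

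The main obstacle is making the global claim rigorous despite the rounded values of $\Sigma$. I would handle this in two steps.
\begin{itemize}
\item \textbf{Exact data.} Keep $\Sigma$ in exact rational form, which is possible because all parameters are rational. Isolate the roots of $P$ by Sturm sequences or Descartes' rule on intervals.
\item \textbf{Robustness of the minimum.} Bound $|R'|$ via $D\ge\lambda_{\min}(\Sigma)$ to get a Lipschitz constant $L$ for $R$. Then evaluating $R$ on a grid of mesh $h$ certifies that $R>R(\theta^*)+\delta$ away from a small neighbourhood of $\theta^*$, provided $Lh<\delta$.
\end{itemize}
Uniqueness of the interior minimum, which the main text asserts, follows if $P$ has exactly one root in that neighbourhood at which $R''>0$.
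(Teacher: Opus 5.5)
\textbf{A false step at $\theta=\pi/2$.} You cannot show $R'(\pi/2)\neq 0$, because it is false. For every $(A,Q)$ in the paper's family (upper-triangular $A$, diagonal $Q$), $\bm{w}_{\mathrm{env}}$ is a critical point of $R$. At $\theta=\pi/2$ you have $D'=-2\Sigma_{12}$, $M'=-\bigl((a_s+a_e)\Sigma_{12}+c\,\Sigma_{22}\bigr)$ and $M/D=a_e$. Hence $R'=D'(1+M^2/D^2)-2(M/D)M'$ reduces to $R'(\pi/2)=-2\bigl[(1-a_sa_e)\Sigma_{12}-c\,a_e\Sigma_{22}\bigr]$. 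This vanishes by the $(1,2)$ entry of the Lyapunov equation.

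The structural reason is the decomposition $R(\bm{w})=\bm{w}^\top Q\bm{w}+\min_\alpha \bm{w}^\top(A-\alpha I)\Sigma(A-\alpha I)^\top\bm{w}$. The second term is nonnegative and vanishes at the left eigenvector $\bm{w}_{\mathrm{env}}$ of $A$. The first term is stationary there because $Q$ is diagonal.

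For the parameters at hand, $\bm{w}_{\mathrm{env}}$ is in fact a strict local minimum. A second-order expansion gives $R(\pi/2+\delta)\approx q_e+\bigl[(a_e-a_s)^2(\Sigma_{11}-\Sigma_{12}^2/\Sigma_{22})-(q_e-q_s)\bigr]\delta^2\approx 0.1+0.071\,\delta^2$. So your sentence ``neither $\pm\bm{w}_{\mathrm{NZ}}$ nor $\pm\bm{w}_{\mathrm{env}}$ is even a local minimum'' is wrong. On $[0,\pi)$ the landscape has two local minima: $R\approx 0.074$ near $43.7^\circ$ and $R=0.100$ at $90^\circ$, separated by a local maximum near $60^\circ$. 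Your other endpoint claim is fine: $R'(0)\approx -5\times 10^{-3}\neq 0$.

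You would also see this in your polynomial. The $t^6$ coefficient of $P(t)$ equals the value of the degree-6 trigonometric form at $\cos\theta=0$, which is $D^2R'(\pi/2)=0$. So $\deg P\le 5$, reflecting the root at $t=\infty$.

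\textbf{The repair, and comparison with the paper.} Drop the derivative test at $\pi/2$ entirely. Exclude $\pm\bm{w}_{\mathrm{env}}$ solely through the certified value gap $R(\theta^*)\approx 0.074<0.100$, which is exactly how the paper argues.

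With that change, your route coincides with the paper's, made rigorous. The paper only evaluates $R(\theta)$ on a $4001$-point grid and compares the minimum with the two axis values. Your exact-rational reduction of $R'=0$ to a low-degree polynomial in $\tan\theta$, with root isolation and a Lipschitz-certified grid, actually proves the global minimum and its uniqueness, which the grid only suggests.

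One practical remark on the Lipschitz step. Use $|M|\le D$ (the lag-one autocorrelation has modulus at most one). This gives $|R'|\le 2|D'|+2|M'|$, a far better constant than one obtained by dividing by $\lambda_{\min}(\Sigma)^2\approx 0.0055$.
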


\begin{proof}
For the explicit $(A,Q)$ above, the closed-form $\Sigma$ from Lemma~\ref{lem:counterexample} reduces $R(\theta)$ to a smooth real-analytic function on $[0,\pi]$. Numerical evaluation on a $4001$-point grid yields the minimum at $\theta^* \approx 43.7^\circ$ with $R(\bm{w}^*) \approx 0.074$, strictly less than both endpoint values $R(\bm{w}_{\mathrm{NZ}}) \approx 0.174$ and $R(\bm{w}_{\mathrm{env}}) = 0.100$. The implicit-function theorem applied to $\partial R/\partial\theta = 0$ at $\theta^*$ guarantees that $\theta^*$ depends smoothly on $(a_s, c, a_e, q_s, q_e)$ in an open neighbourhood, so the interior-optimum property is robust to small parameter perturbations.
\end{proof}

\subsection{Full theorem}

\begin{theorem}[Formal statement]\label{thm:formal}
Let $\bm{x}_t = (s_t, e_t)^\top \in \R^{d_s + d_e}$ evolve under~\eqref{eq:sm-dynamics} with $\rho(A) < 1$ and $Q \succ 0$. Let $\mathcal{F}_{\mathrm{lin}}$ be the class of all unit-norm linear encoders, paired with optimal linear predictors. Define the predictive risk
\begin{equation}
  R(\bm{w}) = \min_{\alpha \in \R} \E\bigl[(\bm{w}^\top \bm{x}_{t+1} - \alpha\,\bm{w}^\top \bm{x}_t)^2\bigr].
\end{equation}
There exist matrices $(A, Q)$ with $\rho(A) < 1$, $Q \succ 0$ such that
\begin{equation}
  \arg\min_{\|\bm{w}\|=1} R(\bm{w}) \cap \{\pm\bm{w}_{\mathrm{NZ}}\} = \emptyset,
\end{equation}
i.e.\ the predictive optimum is strictly different from the NZ encoder (up to sign).
\end{theorem}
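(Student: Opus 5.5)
Theorem~\ref{thm:formal} is existential, so I will prove it with the explicit counterexample of Lemma~\ref{lem:counterexample}, embedded in the 2-dimensional case $d_s=d_e=1$. The only real task is to exclude $\pm\bm{w}_{\mathrm{NZ}}$ from the argmin. For that it is enough to produce one unit vector with strictly smaller risk than $\bm{w}_{\mathrm{NZ}}$; there is no need to locate the minimizer itself.

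First I check that the formal risk $R(\bm{w}) = \min_\alpha \E[(\bm{w}^\top\bm{x}_{t+1}-\alpha\bm{w}^\top\bm{x}_t)^2]$ agrees with \eqref{eq:mse-latent}. Expanding the square gives the quadratic
\[
\bm{w}^\top\Sigma\bm{w} - 2\alpha\,\bm{w}^\top A\Sigma\bm{w} + \alpha^2\,\bm{w}^\top\Sigma\bm{w}.
\]
This uses stationarity, which gives $\E[\bm{x}_{t+1}\bm{x}_t^\top]=A\Sigma$ because $\bm{\xi}_t$ is independent of $\bm{x}_t$, and $\Var(\bm{w}^\top\bm{x}_{t+1})=\bm{w}^\top\Sigma\bm{w}$. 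Minimizing over $\alpha$ yields \eqref{eq:mse-latent}; this step needs $\bm{w}^\top\Sigma\bm{w}>0$, which holds because $\Sigma\succeq Q\succ 0$. Next I take $A=\bigl(\begin{smallmatrix}0.05&-0.90\\0&0.98\end{smallmatrix}\bigr)$ and $Q=\mathrm{diag}(0.05,0.10)$. Here $\rho(A)=0.98<1$ because $A$ is triangular, and $Q\succ0$. The stationary covariance is the unique Lyapunov solution, given in closed form by \eqref{eq:Sigma22}--\eqref{eq:Sigma11}.

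By \eqref{eq:R-env} we have $R(\bm{w}_{\mathrm{env}})=q_e=0.1$ exactly. By \eqref{eq:R-NZ}, $R(\bm{w}_{\mathrm{NZ}})\approx0.174$. The risk is even in $\bm{w}$, so $R(-\bm{w}_{\mathrm{NZ}})=R(\bm{w}_{\mathrm{NZ}})$. Hence $\min R\le 0.1<R(\pm\bm{w}_{\mathrm{NZ}})$, and neither $\pm\bm{w}_{\mathrm{NZ}}$ can be a minimizer. The minimum is attained because $R$ is continuous on the compact unit sphere, so the argmin is nonempty and disjoint from $\{\pm\bm{w}_{\mathrm{NZ}}\}$.

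The main obstacle is rigor in the numerical inequality $R(\bm{w}_{\mathrm{NZ}})>q_e$, that is, condition \eqref{eq:cond-NZ-suboptimal}. I would handle it with exact rational arithmetic rather than decimals. The parameters are rational, so $\Sigma_{11}$ and $\Sigma_{12}$ are explicit rationals. The claim then reduces to checking the sign of $\Sigma_{11}^2-(a_s\Sigma_{11}+c\Sigma_{12})^2-q_e\Sigma_{11}$. The margin (about $0.074\cdot\Sigma_{11}$) is large, so crude interval bounds on $\Sigma_{11}$ and $\Sigma_{12}$ at the third decimal already suffice.

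For the main-text form, which also asks for $|\langle\bm{w}^*,e\rangle|>|\langle\bm{w}^*,s\rangle|$, Proposition~\ref{prop:interior} gives $\theta^*\approx43.7^\circ$. That angle is below $45^\circ$, so this claim would in fact need $\theta^*>45^\circ$, and I would flag the discrepancy rather than rely on it. The formal statement does not require it. The capacity consequence is immediate, since $\cF\supseteq\cF_{\mathrm{lin}}$ implies $\min_\cF R\le\min_{\cF_{\mathrm{lin}}}R$.
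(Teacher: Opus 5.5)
Your proof is correct and takes the same route as the paper, which also rests on the explicit counterexample of Lemma~\ref{lem:counterexample}. The paper additionally cites Proposition~\ref{prop:interior}, but, as you note, the competitor $\bm{w}_{\mathrm{env}}$ with $R=q_e=0.1<0.174\approx R(\pm\bm{w}_{\mathrm{NZ}})$ already suffices, and your checks of the risk formula, evenness, attainment and the exact-arithmetic margin are sound. The one step you omit is the paper's extension to general $(d_s,d_e)$ by a block-diagonal direct sum with stable dynamics on the extra coordinates; this leaves $R(\bm{w}_{\mathrm{NZ}})$ unchanged and keeps the unit vector on the original environment coordinate at risk $q_e$. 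Your flag is also correct: $\cos 43.7^\circ\approx 0.72>\sin 43.7^\circ\approx 0.69$, so this example actually violates the main-text claim $|\langle\bm{w}^*,e\rangle|>|\langle\bm{w}^*,s\rangle|$.
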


\begin{proof}
Lemma~\ref{lem:counterexample} and Proposition~\ref{prop:interior} provide a constructive example in dimension $d_s = d_e = 1$. The same construction embeds into higher dimensions by direct sum with arbitrary stable autonomous dynamics on the additional coordinates.
\end{proof}

\begin{corollary}[Capacity does not regularize toward causality]\label{cor:capacity-sm}
Let $\mathcal{F}_1 \subset \mathcal{F}_2$ with $\mathcal{F}_{\mathrm{lin}} \subseteq \mathcal{F}_1$. Define $R^*_i = \min_{\phi \in \mathcal{F}_i} R(\phi)$. Then $R^*_2 \leq R^*_1 \leq \min_{\bm{w}\in\cF_{\mathrm{lin}}} R(\bm{w})$. On every $(A,Q)$ where the linear minimizer is non-NZ (Theorem~\ref{thm:formal}), we therefore have $R^*_2 < R(\bm{w}_{\mathrm{NZ}})$, so no enlargement of the model class beyond $\mathcal{F}_{\mathrm{lin}}$ can make $\bm{w}_{\mathrm{NZ}}$ predictively optimal.
\end{corollary}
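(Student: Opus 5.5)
The claim is a chain of two inequalities followed by a strict one, and I will treat the pieces separately. The first inequality, $R^*_2 \le R^*_1$, is monotonicity of the infimum under set inclusion: if $\mathcal{F}_1 \subset \mathcal{F}_2$, any $\phi\in\mathcal{F}_1$ is a candidate in $\mathcal{F}_2$, so $\min_{\mathcal{F}_2} R \le R(\phi)$ for every such $\phi$. Taking the infimum over $\mathcal{F}_1$ gives the result. The same argument applied to $\mathcal{F}_{\mathrm{lin}}\subseteq\mathcal{F}_1$ yields $R^*_1 \le \min_{\bm{w}\in\mathcal{F}_{\mathrm{lin}}} R(\bm{w})$.

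The direction written in the statement, $R^*_1 \le \min_{\mathcal{F}_{\mathrm{lin}}}R$, is the one that follows from inclusion, so I will read the chain as stated. For minima rather than infima I will either assume attainment or replace $\min$ by $\inf$ throughout; the strict inequality below survives this change.

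To use inclusion, $R$ must mean the same thing on both classes. So the first concrete step is to fix the extension of $R$ to a general encoder $\phi$, namely
\[
R(\phi)=\min_{g}\E\bigl[(\phi(\bm{x}_{t+1})-g(\phi(\bm{x}_t)))^2\bigr]
\]
over a predictor class containing all linear maps $y\mapsto\alpha y$. I must then check that this restricts on $\mathcal{F}_{\mathrm{lin}}$ to the $R(\bm{w})$ of Theorem~\ref{thm:formal}. That check needs the predictor class to reduce to linear maps on linear encoders. In the Gaussian setting this holds automatically, because the conditional expectation of $y_{t+1}$ given $y_t$ is linear when $(y_{t+1},y_t)$ is jointly Gaussian. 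Once the restriction is confirmed, monotonicity applies verbatim.

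For the strict inequality, take $(A,Q)$ from Theorem~\ref{thm:formal}, concretely the instance of Lemma~\ref{lem:counterexample}. Since $\bm{w}_{\mathrm{NZ}}\in\mathcal{F}_{\mathrm{lin}}$ is not a minimizer, $\min_{\mathcal{F}_{\mathrm{lin}}}R < R(\bm{w}_{\mathrm{NZ}})$. Here I need attainment of the linear minimum, which holds because $R$ is continuous on the compact unit sphere: $\bm{w}^\top\Sigma\bm{w}>0$ since $\Sigma\succeq Q\succ0$. Concretely, Proposition~\ref{prop:interior} gives $0.074<0.174$. Chaining with the previous paragraph gives $R^*_2<R(\bm{w}_{\mathrm{NZ}})$.

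Hence $\bm{w}_{\mathrm{NZ}}$, which is itself a member of $\mathcal{F}_2$, attains risk strictly above $R^*_2$ and cannot be a minimizer in $\mathcal{F}_2$. I expect the main obstacle to be the consistency of $R$ across classes, not the inequalities themselves.
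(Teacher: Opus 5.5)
Your proposal is correct and follows the paper's route. The paper justifies the corollary only by the remark that enlarging the class can only lower the risk, i.e.\ monotonicity of the minimum under inclusion, together with the strict bound $\min_{\cF_{\mathrm{lin}}} R < R(\bm{w}_{\mathrm{NZ}})$ whenever $\pm\bm{w}_{\mathrm{NZ}}$ is not a linear minimizer. You additionally fix one definition of $R$ on nonlinear encoders that restricts correctly to $\cF_{\mathrm{lin}}$ (via linearity of $\E[y_{t+1}\mid y_t]$ in the Gaussian case), and you get attainment of the linear minimum by compactness using $\Sigma \succeq Q \succ 0$; both are steps the paper leaves implicit.
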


The corollary is a one-sided inductive-bias statement. It does \emph{not} claim that richer classes are quantitatively ``more non-causal'' than narrower classes (e.g.\ in cosine-distance from $\bm{w}_{\mathrm{NZ}}$), only that capacity alone cannot move the optimum back to NZ once the linear class has already moved away from it. A richer class might find a different, equally non-NZ optimum at lower risk, or a more strongly non-NZ optimum; either way, the NZ projector is no longer reachable by predictive-risk minimization.

%% ═══════════════════════════════════════════════════════════════════════════
\section{System prediction objective}\label{sec:system-pred}
%% ═══════════════════════════════════════════════════════════════════════════

We also analyze the \emph{system prediction} objective: predict $s_{t+1}$ from the latent $y_t = \bm{w}^\top \bm{x}_t$. The MSE is
\begin{equation}\label{eq:mse-sys}
  R_{\mathrm{sys}}(\bm{w}) = \bm{c}^\top \Sigma\,\bm{c} - \frac{(\bm{c}^\top A\,\Sigma\,\bm{w})^2}{\bm{w}^\top \Sigma\,\bm{w}},
\end{equation}
where $\bm{c} = (1, 0, \ldots, 0)^\top$.

For the parameters of Lemma~\ref{lem:counterexample}, $R_{\mathrm{sys}}(\bm{w}_{\mathrm{NZ}}) \approx 0.174$, while $R_{\mathrm{sys}}(\bm{w}_{\mathrm{env}}) \approx 0.050$ and the global minimum of $R_{\mathrm{sys}}$ over unit encoders is $\approx 0.050$ at $\theta \approx 86.8^\circ$ (essentially the pure-environment direction). The NZ encoder predicts $s_{t+1}$ approximately $3.5\times$ worse than the optimal encoder, even though the prediction \emph{target} is the system coordinate. This is the operational-grounding failure in its sharpest form: even when the target is the system coordinate, the optimal \emph{encoder} may need to encode environment information to extract the predictive content of $e_t$ about future $s_{t+1}$ through the coupling $A_{12} = c$.

For diagonal $A$ ($c = 0$), $\bm{c}^\top A \Sigma \bm{w} = a_s \Sigma_{11} \cos\theta$ is independent of the environment block, so $R_{\mathrm{sys}}$ is monotone in $|\cos\theta|$ and the NZ encoder is system-prediction optimal. Off-diagonal coupling is therefore both necessary and sufficient for the system-prediction-grounding failure to occur.

%% ═══════════════════════════════════════════════════════════════════════════
\section{Predictive information bottleneck}\label{sec:ib}
%% ═══════════════════════════════════════════════════════════════════════════

The predictive information bottleneck (IB) augments the latent self-prediction objective with a compression penalty~\cite{tishby2000ib}:
\begin{equation}
  \mathrm{IB}(\bm{w}; \beta) = R(\bm{w}) + \beta \log \Var(y_t).
\end{equation}
In the linear-Gaussian setting, $\Var(y_t) = \bm{w}^\top \Sigma\,\bm{w}$, so the compression term penalizes high-variance directions. This creates a tension when the environment axis has lower MSE but higher variance ($\sigma_e^2 > \sigma_s^2$).

We swept $\beta \in \{10^{-4}, 10^{-3}, 3\!\times\!10^{-3}, 10^{-2}, 3\!\times\!10^{-2}, 10^{-1}, 1\}$ across all 100 sweep configurations. Within this regularized regime (i.e.\ $\beta > 0$), the IB-optimal encoder angle moves \emph{smoothly} with $\beta$ and never reaches the NZ axis $\theta = 0$ on any off-diagonal configuration. The only discontinuity in the regularization path is at the singular boundary $\beta = 0$, where the compression term vanishes and the optimum jumps from the unregularized minimum to the IB-corrected one as soon as $\beta > 0$ is introduced. This jump is generic to any IB problem with a degenerate $\beta = 0$ limit and is therefore not a useful regularizer.

The negative claim is therefore: IB regularization cannot be tuned within the regularized regime to gently steer the encoder from the predictive optimum back to the NZ axis. Recovery requires a different objective (e.g., explicit system-prediction grounding or coarse-graining), not a different value of $\beta$.

\textit{Why IB fails.}---The failure is deeper than a mere quantitative misalignment. In the linear-Gaussian construction of Lemma~\ref{lem:counterexample}, the compression-optimal direction (the eigenvector of $\Sigma$ with smallest eigenvalue) lies at $\theta \approx 43.7^\circ$, essentially coincident with the predictive-optimum angle $\theta^* \approx 43.7^\circ$. The two objectives---prediction and compression---are not in tension; they are \emph{aligned}. Adding the compression penalty therefore reinforces, rather than counteracts, the environment-dominant bias. IB regularization fails to recover causality not because the trade-off is hard to tune, but because the predictability landscape and the variance landscape share the same non-causal minimum.

%% ═══════════════════════════════════════════════════════════════════════════
\section{Nonlinear experiment details}\label{sec:nonlinear-details}
%% ═══════════════════════════════════════════════════════════════════════════

\subsection{Dynamical system}

We use a Duffing oscillator coupled to a hidden OU environment, integrated with explicit Euler at $\Delta t = 0.05$:
\begin{align}
  s_{t+1} &= s_t + (-\alpha_s s_t - \beta_s s_t^3 + \gamma_{SE}\,e_t + \sigma_s\,\xi_{s,t})\,\Delta t, \label{eq:duffing-s} \\
  e_{t+1} &= e_t + (-\alpha_e\,e_t + \sigma_e\,\xi_{e,t})\,\Delta t, \label{eq:duffing-e}
\end{align}
with $\alpha_s = 0.5$, $\beta_s = 1.0$, $\sigma_s = 0.3$, $\sigma_e = 0.2$.

The sweep axes are:
\begin{itemize}
  \item $\alpha_e \in \{0.01, 0.03, 0.1, 0.3\}$ (environment damping; small $\alpha_e$ = slow / predictable environment).
  \item $\gamma_{SE} \in \{0.1, 0.5, 1.0, 2.0, 5.0\}$ (coupling strength).
  \item Grounded $\in \{$False, True$\}$ (loss targets full state or system only).
  \item 3 random seeds per configuration, with three configurations excluded for numerical reasons (one missing seed each), yielding 100 tasks (51 unconstrained, 49 grounded).
\end{itemize}

\subsection{Architecture and training}

Each task trains a single-layer GRU with 32 hidden units, input dimension 2 (observing both $s$ and $e$), output dimension 1 (grounded) or 2 (unconstrained). Training uses Adam with learning rate $10^{-3}$ for 60 epochs on sliding windows of length 20 from 40 trajectories of 80 steps each. Trajectories are split 80/20 into training and held-out validation; metrics below are computed on 200 held-out trajectories generated with new random seeds.

\subsection{Alignment metric and robustness}

After training, we extract the final GRU hidden state $\bm{h} \in \R^{32}$ for each test trajectory. For each hidden unit $j$, we compute the Pearson correlation $|r(h_j, s)|$ and $|r(h_j, e)|$ across trajectories. The model is classified as \emph{environment-dominant} if $\max_j |r(h_j, e)| > \max_j |r(h_j, s)|$.

The binary classification at the strict threshold $1.00$ overstates the encoder-level asymmetry. The empirical distribution of the ratio $\max_j|r(h_j,e)| / \max_j|r(h_j,s)|$ is concentrated within $[0.99, 1.05]$ for both training conditions; raising the threshold to $1.05$ yields only $1/51$ env-dominant unconstrained models and $0/49$ grounded models. The encoder-level signal is therefore \emph{weak}, and the binary statistic captures a small statistical asymmetry rather than a categorical separation. The OOD-degradation metric (next subsection) is independent of any threshold and provides the more robust empirical signature.

\subsection{OOD evaluation}

We generate a shifted environment with $\alpha_e' = 3\alpha_e$ and $\sigma_e' = 2\sigma_e$, keeping system parameters unchanged. The OOD degradation ratio is $\mathrm{MSE}_{\mathrm{OOD}} / \mathrm{MSE}_{\mathrm{ID}}$, computed on $200$ held-out trajectories from the shifted dynamics. Across the 100 tasks:
\begin{itemize}
  \item Unconstrained: median $1.82$, IQR $1.66$--$1.89$, $n=51$.
  \item Grounded: median $1.00$, IQR $0.97$--$1.08$, $n=49$.
\end{itemize}
The two distributions are well separated even though the encoder-level correlation ratios are not. A two-sided Mann--Whitney $U$ test yields $p < 10^{-15}$.

\subsection{Statistical significance of the env-dominance comparison}

The 55\% (28/51) vs.\ 24\% (12/49) split has Wilson 95\% CIs $[41\%, 68\%]$ and $[15\%, 38\%]$ respectively; Fisher's exact test on the $2 \times 2$ table gives odds ratio $3.75$ and two-sided $p = 2.3 \times 10^{-3}$.

%% ═══════════════════════════════════════════════════════════════════════════
\section{Measure-theoretic robustness of NZ suboptimality}\label{sec:measure}
%% ═══════════════════════════════════════════════════════════════════════════

Theorem~1 of the main text is an existence statement. Here we show that the property is not isolated: the set of parameters $(A,Q)$ for which the predictive optimum is strictly non-NZ has non-empty interior in the parameter space, and therefore positive Lebesgue measure. This upgrades the theorem from ``there exists one counterexample'' to ``counterexamples occupy a finite-volume region.''

\subsection{Open-set condition}

Parametrize the upper-triangular family~
\eqref{eq:A-uppertri} by $\bm{\eta} = (a_s, c, a_e, q_s, q_e) \in (-1,1) \times \R \times (-1,1) \times (0,\infty)^2$ with the stability constraint $\rho(A) = \max\{|a_s|, |a_e|\} < 1$. The stationary covariance entries $\Sigma_{11}(\bm{\eta}), \Sigma_{12}(\bm{\eta}), \Sigma_{22}(\bm{\eta})$ are rational functions of $\bm{\eta}$ with non-vanishing denominators on the stable domain (the denominators are $1-a_e^2$, $1-a_s a_e$, and $1-a_s^2$, all bounded away from zero when $\rho(A) < 1$). Hence $\Sigma(\bm{\eta})$ is a smooth (in fact, real-analytic) function of $\bm{\eta}$ on the open set $\mathcal{U} = \{\bm{\eta} : |a_s|, |a_e| < 1,\; q_s, q_e > 0\}$.

Define the \emph{NZ-suboptimality gap}
\begin{equation}
  \Delta(\bm{\eta}) \;:=\; \Sigma_{11}^2 - (a_s \Sigma_{11} + c\,\Sigma_{12})^2 - q_e\,\Sigma_{11},
\end{equation}
so that $\Delta(\bm{\eta}) > 0 \Leftrightarrow R(\bm{w}_{\mathrm{NZ}}) > R(\bm{w}_{\mathrm{env}})$ by Lemma~\ref{lem:counterexample}. Because $\Delta$ is a composition of smooth functions, it is smooth on $\mathcal{U}$. Lemma~\ref{lem:counterexample} evaluated at the explicit parameter point $\bm{\eta}_0 = (0.05, -0.90, 0.98, 0.05, 0.10)$ gives $\Delta(\bm{\eta}_0) \approx (2.312)^2 - (0.05 \cdot 2.312 + (-0.90) \cdot (-2.342))^2 - 0.10 \cdot 2.312 > 0$. By continuity of $\Delta$, there exists an open ball $B(\bm{\eta}_0, \varepsilon) \subset \mathcal{U}$ on which $\Delta > 0$.

\begin{proposition}[Positive-measure set of counterexamples]\label{prop:measure}
The set $\mathcal{S} = \{\bm{\eta} \in \mathcal{U} : \Delta(\bm{\eta}) > 0\}$ is open and contains $\bm{\eta}_0$. In particular, $\mathcal{S}$ has positive Lebesgue measure in $\R^5$.
\end{proposition}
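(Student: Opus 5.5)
The proof is a standard open-preimage argument, and I would carry it out in three steps.

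\emph{Step 1 ($\mathcal{U}$ is open and $\Delta$ is continuous).} The set $\mathcal{U}$ is open in $\R^5$, since it is cut out by the strict inequalities $|a_s|<1$, $|a_e|<1$, $q_s>0$, $q_e>0$, with $c$ unconstrained. By \eqref{eq:Sigma22}--\eqref{eq:Sigma11}, the entries $\Sigma_{ij}(\bm{\eta})$ are rational in $\bm{\eta}$. Their denominators are $1-a_e^2$, $1-a_sa_e$ and $1-a_s^2$, and each is strictly positive on $\mathcal{U}$. Strict positivity at every point is all that is needed here, not a uniform bound; for $1-a_sa_e$ it follows from $|a_sa_e|<1$. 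Hence $\Sigma$ is real-analytic on $\mathcal{U}$. Then $\Delta$ is a polynomial in $(a_s,c,q_e,\Sigma_{11},\Sigma_{12})$, so it is continuous on $\mathcal{U}$.

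\emph{Step 2 (openness of $\mathcal{S}$).} We have $\mathcal{S}=\Delta^{-1}((0,\infty))$. This is the preimage of an open set under a continuous map, so it is relatively open in $\mathcal{U}$. Because $\mathcal{U}$ is itself open in $\R^5$, $\mathcal{S}$ is open in $\R^5$. It is worth recording, via Lemma~\ref{lem:counterexample}, what $\Delta>0$ means. Since $\Sigma_{11}\geq q_s/(1-a_s^2)>0$ on $\mathcal{U}$, dividing $\Delta$ by $\Sigma_{11}$ preserves sign, so $\Delta(\bm{\eta})>0$ is equivalent to $R(\bm{w}_{\mathrm{NZ}})>R(\bm{w}_{\mathrm{env}})=q_e$. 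Every point of $\mathcal{S}$ is therefore a genuine instance of NZ suboptimality.

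\emph{Step 3 ($\bm{\eta}_0\in\mathcal{S}$, and positive measure).} Clearly $\bm{\eta}_0=(0.05,-0.90,0.98,0.05,0.10)$ lies in $\mathcal{U}$. Using the values from Lemma~\ref{lem:counterexample}:
\[
\Sigma_{11}^2\approx 5.345,\qquad a_s\Sigma_{11}+c\,\Sigma_{12}\approx 0.116+2.108=2.223,
\]
\[
\bigl(a_s\Sigma_{11}+c\,\Sigma_{12}\bigr)^2\approx 4.943,\qquad q_e\Sigma_{11}\approx 0.231.
\]
This gives $\Delta(\bm{\eta}_0)\approx 0.17>0$. Since $\mathcal{S}$ is a nonempty open subset of $\R^5$, it contains a ball $B(\bm{\eta}_0,\varepsilon)$. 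That ball has Lebesgue measure $c_5\varepsilon^5>0$, where $c_5$ is the volume of the unit ball in $\R^5$, and so $\mathcal{S}$ has positive measure.

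The only step that is not purely formal is certifying that $\Delta(\bm{\eta}_0)>0$. The margin of about $0.17$ is small compared with individual terms of size about $5$, so three-digit rounding of $\Sigma_{ij}$ needs to be controlled. I would handle this by evaluating \eqref{eq:Sigma22}--\eqref{eq:Sigma11} in exact rational arithmetic at $\bm{\eta}_0$, which is possible because all parameters are rational. Alternatively, I would propagate explicit error bounds of at most $10^{-3}$ on each $\Sigma_{ij}$ through $\Delta$. A crude estimate shows the resulting perturbation of $\Delta$ is at most about $0.02$, well below the margin.
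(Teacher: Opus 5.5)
Your proposal is correct and follows the same argument as the paper. $\Delta$ is continuous (indeed real-analytic) on the open set $\mathcal{U}$ and $\Delta(\bm{\eta}_0)>0$, so $\mathcal{S}=\Delta^{-1}((0,\infty))$ is a nonempty open subset of $\R^5$ and has positive measure; your two refinements are also sound: pointwise positivity of $1-a_e^2$, $1-a_sa_e$, $1-a_s^2$ is all that is needed (the paper's ``bounded away from zero'' does not hold uniformly near $\partial\mathcal{U}$), and the margin $\Delta(\bm{\eta}_0)\approx 0.17$ is large enough that exact rational evaluation or your first-order error bound certifies it.
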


\begin{proof}
Smoothness of $\Delta$ on the open set $\mathcal{U}$ and $\Delta(\bm{\eta}_0) > 0$.
\end{proof}

\subsection{Interior optimum is also robust}

Proposition~\ref{prop:interior} states that for $\bm{\eta}_0$ the global minimum of $R(\theta)$ lies in the interior of $[0, \pi/2]$. The risk $R(\theta; \bm{\eta})$ is jointly smooth in $(\theta, \bm{\eta})$ for $\bm{\eta} \in \mathcal{U}$. The first-order condition $\partial R/\partial\theta = 0$ and second-order condition $\partial^2 R/\partial\theta^2 > 0$ at the interior minimum $\theta^*$ are smooth in $\bm{\eta}$. By the implicit-function theorem, $\theta^*(\bm{\eta})$ is a smooth function in a neighbourhood of $\bm{\eta}_0$; the property $0 < \theta^*(\bm{\eta}) < \pi/2$ is preserved on a possibly smaller open neighbourhood. Hence the \emph{interior-optimum} property also holds on a set of positive measure.

\begin{proposition}[Robust interior optimum]\label{prop:interior-robust}
There exists an open set $\mathcal{S}_{\mathrm{int}} \subset \mathcal{U}$ of positive Lebesgue measure on which the predictive risk $R(\theta;\bm{\eta})$ has a unique global minimum at $\theta^*(\bm{\eta}) \in (0, \pi/2)$ with $R(\theta^*;\bm{\eta}) < \min\{R(\bm{w}_{\mathrm{NZ}};\bm{\eta}), R(\bm{w}_{\mathrm{env}};\bm{\eta})\}$.
\end{proposition}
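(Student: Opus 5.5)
The plan is a perturbation argument around the explicit point $\bm{\eta}_0=(0.05,-0.90,0.98,0.05,0.10)$ of Lemma~\ref{lem:counterexample}. It has two ingredients: a \emph{quantitative} version of Proposition~\ref{prop:interior} at $\bm{\eta}_0$ (uniqueness, nondegeneracy, and a strict value gap), and joint continuity of $R(\theta;\bm{\eta})$ on $[0,\pi]\times\mathcal{U}$.

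\emph{Step 1: reduce $R$ to a rational function and certify the minimum at $\bm{\eta}_0$.} Write $M=\tfrac12(A\Sigma+\Sigma A^\top)$, so that $\bm{w}^\top A\Sigma\bm{w}=\bm{w}^\top M\bm{w}$. Substituting $t=\tan\theta$ and clearing the common factor $(1+t^2)$ gives
\begin{equation*}
R=\frac{p(t)^2-m(t)^2}{(1+t^2)\,p(t)},
\end{equation*}
where $p$ and $m$ are the quadratic forms $\bm{w}^\top\Sigma\bm{w}$ and $\bm{w}^\top M\bm{w}$ written in $t$. The critical-point equation $\partial_\theta R=0$ is then a polynomial equation in $t$ of low, fixed degree, with coefficients rational in $\Sigma(\bm{\eta})$ and hence real-analytic in $\bm{\eta}$. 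At $\bm{\eta}_0$ I would isolate its real roots with a Sturm sequence or interval arithmetic. This should confirm three things.
\begin{itemize}
\item[(i)] There is exactly one critical point with $R<\min\{R(\bm{w}_{\mathrm{NZ}}),R(\bm{w}_{\mathrm{env}})\}=0.100$, located at $\theta_0^*\approx43.7^\circ\in(0,\pi/2)$.
\item[(ii)] $\partial_\theta^2R(\theta_0^*;\bm{\eta}_0)=:\kappa>0$.
\item[(iii)] All other critical values, and the values at $\theta=0$ and $\theta=\pi/2$, exceed $R(\theta_0^*)\approx0.074$ by a margin $\delta>0$.
\end{itemize}
Replacing the $4001$-point grid by this exact root count is what upgrades ``numerical minimum'' to ``unique global minimum''. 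I expect this certification to be the main obstacle. If root isolation turns out messy, the fallback is to evaluate the grid with rigorous error bounds, using an explicit Lipschitz bound on $\partial_\theta R$ over the circle to exclude any missed minima between grid points.

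\emph{Step 2: local persistence via the implicit function theorem.} Because $\partial_\theta^2R>0$ at $(\theta_0^*,\bm{\eta}_0)$ and $R$ is real-analytic jointly in $(\theta,\bm{\eta})$, the implicit function theorem gives a smooth branch $\theta^*(\bm{\eta})$ of critical points near $\bm{\eta}_0$. By continuity we can pick an interval $I=[\theta_0^*-r,\theta_0^*+r]\subset(0,\pi/2)$ and a ball $B(\bm{\eta}_0,\varepsilon_1)$ on which $\partial_\theta^2R\ge\kappa/2$ throughout $I$. On this set $R(\cdot;\bm{\eta})$ is strictly convex in $I$, so $\theta^*(\bm{\eta})$ is its only minimiser there and lies in $(0,\pi/2)$.

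\emph{Step 3: global control outside $I$.} Using Step 1, shrink $r$ if necessary so that $\min_{\theta\notin I}R(\theta;\bm{\eta}_0)\ge R(\theta_0^*;\bm{\eta}_0)+\delta'$ for some $\delta'>0$; the region outside $I$ includes both $\theta=0$ and $\theta=\pi/2$, taken modulo $\pi$. Now fix a compact neighbourhood $K$ of $\bm{\eta}_0$ in $\mathcal{U}$. Since $R$ is uniformly continuous on $[0,\pi]\times K$, there is $\varepsilon\le\varepsilon_1$ such that $|R(\theta;\bm{\eta})-R(\theta;\bm{\eta}_0)|<\delta'/3$ for all $\theta$ whenever $|\bm{\eta}-\bm{\eta}_0|<\varepsilon$. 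Also, $R(\theta^*(\bm{\eta});\bm{\eta})\le R(\theta_0^*;\bm{\eta}_0)+\delta'/3$ on this ball, after shrinking $\varepsilon$ if needed. Hence on $B(\bm{\eta}_0,\varepsilon)$ every $\theta\notin I$ satisfies $R(\theta;\bm{\eta})>R(\theta^*(\bm{\eta});\bm{\eta})+\delta'/3$, and in particular so do $\bm{w}_{\mathrm{NZ}}$ and $\bm{w}_{\mathrm{env}}$.

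Combining Steps 2 and 3, on $\mathcal{S}_{\mathrm{int}}:=B(\bm{\eta}_0,\varepsilon)$ the global minimum is unique, interior to $(0,\pi/2)$, and strictly below both endpoint risks. Being an open ball in $\R^5$, $\mathcal{S}_{\mathrm{int}}$ has positive Lebesgue measure.
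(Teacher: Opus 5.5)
Your proof is correct. Your Step 2 is the whole of the paper's argument: the paper applies the implicit-function theorem to $\partial R/\partial\theta=0$ at $\bm{\eta}_0$, using the numerically verified condition $\partial^2R/\partial\theta^2>0$ at $\theta^*\approx43.7^\circ$. It then concludes that $0<\theta^*(\bm{\eta})<\pi/2$ persists on a smaller neighbourhood.

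By itself, that argument only gives a smooth branch of nondegenerate \emph{local} minimizers. It does not show that the branch remains the unique \emph{global} minimizer. Nor does it show that the branch stays strictly below $R(\bm{w}_{\mathrm{NZ}};\bm{\eta})$ and $R(\bm{w}_{\mathrm{env}};\bm{\eta})$, and the proposition asserts both. Your Step 3 supplies exactly this missing compactness argument: a value gap $\delta'$ off the interval $I$ at $\bm{\eta}_0$, carried to nearby $\bm{\eta}$ by uniform continuity of $R$ on $[0,\pi]\times K$. Your Step 1 replaces the $4001$-point grid, which cannot by itself certify global uniqueness at $\bm{\eta}_0$, with a certified root count. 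So your route is the paper's route made complete, at the cost of real certification work at the base point.

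That certification should go through; my own numerical check of $R(\theta;\bm{\eta}_0)$ gives the following picture. Apart from the minimum at $\theta\approx43.7^\circ$, the only critical points modulo $\pi$ are:
\begin{itemize}
\item a local maximum just off the NZ axis, with $R\approx0.174$;
\item a local maximum near $\theta\approx59^\circ$, with $R\approx0.113$;
\item the environment axis, a local minimum with $R=0.100$.
\end{itemize}
All of these lie well above $R(\theta_0^*)\approx0.0740$.

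A minor point: in Step 3 you never need to shrink $r$. For any fixed $r$, uniqueness of the minimizer plus compactness already gives $\delta'>0$.
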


\begin{proof}
Implicit-function theorem applied to $\partial R/\partial\theta = 0$ at $\bm{\eta}_0$, using the strict second-order condition verified numerically at $\theta^* \approx 43.7^\circ$.
\end{proof}

\subsection{Implication for the main theorem}

Proposition~\ref{prop:measure} and Proposition~\ref{prop:interior-robust} together imply that Theorem~\ref{thm:formal} is not an isolated pathology. In the five-dimensional parameter space of stable upper-triangular linear-Gaussian dynamics, the configurations where the predictive optimum is strictly and interiorly non-NZ occupy an open set of positive measure. The 120/40 split in the deterministic grid sweep (40 diagonal cases where NZ wins, 120 off-diagonal cases where NZ loses---75\% suboptimal overall, 100\% suboptimal for any nonzero coupling) is therefore representative of a generic parameter region, not a fine-tuned exception.

%% ═══════════════════════════════════════════════════════════════════════════
\section{Bifurcation of the predictive optimum with coupling strength}\label{sec:bifurcation}
%% ═══════════════════════════════════════════════════════════════════════════

To understand \emph{how} the optimum leaves the NZ axis, we track $\theta^*(c)$ as the off-diagonal coupling $c$ is varied, holding the other parameters fixed at the values of Lemma~\ref{lem:counterexample}.

\subsection{The uncoupled limit $c = 0$}

When $c = 0$, the dynamics decouple: $A = \mathrm{diag}(a_s, a_e)$. The covariance is diagonal, $\Sigma_{12} = 0$, and the risk becomes
\begin{equation}
  R(\theta; c=0) = \Sigma_{11}\cos^2\theta + \Sigma_{22}\sin^2\theta - \frac{(a_s \Sigma_{11} \cos^2\theta + a_e \Sigma_{22} \sin^2\theta)^2}{\Sigma_{11}\cos^2\theta + \Sigma_{22}\sin^2\theta}.
\end{equation}
For our parameters $\Sigma_{11} \approx 0.0526$, $\Sigma_{22} \approx 2.525$, $a_s = 0.05$, $a_e = 0.98$, the pure-environment encoder has risk $R(\bm{w}_{\mathrm{env}}) = q_e = 0.10$, while the NZ encoder has risk $R(\bm{w}_{\mathrm{NZ}}) = (1-a_s^2)\Sigma_{11} \approx 0.0524$. The NZ encoder is \emph{superior} in the uncoupled limit because the system mode is faster-decaying ($a_s \ll a_e$) and therefore leaves less predictable residue. The global minimum is at $\theta^*(0) = 0$.

\subsection{Crossing the bifurcation point}

As $c$ increases from zero, the off-diagonal term introduces cross-correlation $\Sigma_{12} \propto c$. The risk landscape deforms continuously. Numerical continuation reveals the following structure:

\begin{itemize}
  \item For $0 \leq c < c^* \approx 0.42$, the global minimum remains at $\theta^* = 0$ (NZ-optimal).
  \item At $c = c^*$, the minimum detaches from the boundary and moves into the interior. This is a \emph{supercritical boundary bifurcation}: the second derivative $\partial^2 R/\partial\theta^2|_{\theta=0}$ changes sign from positive to negative as $c$ crosses $c^*$, destabilizing the boundary minimum.
  \item For $c < c^*$ (negative coupling), $\theta^*(c)$ moves smoothly into the interior, reaching $\approx 43.7^\circ$ at $c = -0.90$ and continuing toward $\pi/2$ as $c \to -\infty$ (with the stability bound $|a_e| < 1$ imposing an upper limit).
\end{itemize}

\begin{proposition}[Boundary bifurcation]\label{prop:bifurcation}
Fix $a_s = 0.05$, $a_e = 0.98$, $q_s = 0.05$, $q_e = 0.10$. There exists a critical coupling $c^* \in (0, 1)$ such that: (i)~for $0 \leq c < c^*$, $\theta^* = 0$ is the unique global minimizer; (ii)~at $c = c^*$, $\partial^2 R/\partial\theta^2|_{\theta=0} = 0$; (iii)~for $c > c^*$, the global minimizer lies in $(0, \pi/2)$ and depends smoothly on $c$.
\end{proposition}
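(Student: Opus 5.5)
The plan is to reduce everything to explicit one-variable functions of $c$, using the closed forms \eqref{eq:Sigma22}--\eqref{eq:Sigma11} with $a_s=0.05$, $a_e=0.98$, $q_s=0.05$, $q_e=0.10$ fixed. Write $R(\theta;c)=N(\theta;c)/D(\theta;c)$, where $D=\bm{w}^\top\Sigma\bm{w}$ and $N=D^2-(\bm{w}^\top A\Sigma\bm{w})^2$. Both are trigonometric polynomials in $\theta$ of degree at most $4$, and their coefficients are polynomials in $c$, since $\Sigma_{22}$ is constant, $\Sigma_{12}$ is linear in $c$, and $\Sigma_{11}$ is affine in $c^2$. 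Define $g(c):=\partial^2R/\partial\theta^2|_{\theta=0}$. Differentiating the quotient twice at $\theta=0$ expresses $g$ through $D(0),D'(0),D''(0),N(0),N'(0),N''(0)$, so $g$ is an explicit rational function of $c$ whose denominator is a power of $\Sigma_{11}(c)>0$. In particular $g$ is real-analytic on $[0,1]$.

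For part (ii) I will use the intermediate value theorem. At $c=0$ the risk depends only on $\cos^2\theta$ and $\sin^2\theta$, as shown in the uncoupled-limit subsection. Expanding to second order gives $g(0)=2\bigl(R(\bm{w}_{\mathrm{env}})\text{-type curvature}\bigr)$, and concretely
\[
g(0)=2\Bigl[\Sigma_{22}-\Sigma_{11}-\bigl(2a_sa_e\Sigma_{22}-a_s^2\Sigma_{22}-a_s^2\Sigma_{11}\bigr)\cdot\tfrac{\Sigma_{11}}{\Sigma_{11}}+\dots\Bigr],
\]
which I will evaluate with $\Sigma_{11}\approx0.0526$ and $\Sigma_{22}\approx2.525$ to confirm $g(0)>0$. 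This is consistent with $\theta^*(0)=0$. Next I will evaluate $g$ at one point $c_1\in(0,1)$, for instance $c_1=0.9$ (the mirror of the Lemma~\ref{lem:counterexample} value), and show $g(c_1)<0$. The IVT then yields a zero $c^*\in(0,c_1)$, and I will take $c^*$ to be the smallest such zero. Because $g$ is rational, its zeros in $[0,1]$ can be isolated by clearing denominators and applying Sturm's theorem to the numerator polynomial. This also lets me certify that the numerical value $c^*\approx0.42$ is the first sign change.

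For (i) and (iii) I will combine local and global information. For $c<c^*$ we have $g(c)>0$, so $\theta=0$ is a strict local minimum. Globality follows by comparing $R(0;c)$ with $\min_\theta R(\theta;c)$ on a finite $c$-grid, and extending between grid points using a Lipschitz bound in $c$ on the compact set $[0,\pi]\times[0,c^*]$. For $c>c^*$ we have $g(c)<0$, so $\theta=0$ is no longer a minimum. The minimizer then lies in the interior, and it lies in $(0,\pi/2)$ rather than $(\pi/2,\pi)$ because of the sign of the cross term $\Sigma_{12}\propto c$. Smooth dependence on $c$ follows from the implicit-function theorem applied to $\partial_\theta R=0$, provided $\partial^2_\theta R(\theta^*(c);c)>0$. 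I will check this nondegeneracy along the branch in the same way as in Proposition~\ref{prop:interior-robust}.

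The main obstacle is that for $c\neq0$ the first derivative at the boundary does not vanish: $\partial_\theta D|_{0}=2\Sigma_{12}\neq0$. So $\theta=0$ is generally not a critical point, and the claim that it is a local minimizer for $0<c<c^*$ needs care. I will first compute $\partial_\theta R|_{\theta=0}$ explicitly. If it does not vanish, I will read (i) as minimization over the half-range $[0,\pi/2]$ with a one-sided optimality condition at the endpoint. I will then verify that the sign of the one-sided derivative is consistent with the claimed regime, and isolate exactly where the curvature criterion $g(c^*)=0$ marks the transition.
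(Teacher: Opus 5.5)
\textbf{There is a genuine gap: parts (i) and (iii) are false as stated, and the obstacle you flag at the end is what defeats them.} Your plan fails at the step ``for $c<c^*$ we have $g(c)>0$, so $\theta=0$ is a strict local minimum.'' Positive curvature certifies a minimum only at a critical point.

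By the envelope theorem,
$\partial_\theta R|_{\theta=0}=2\,\E[(s_{t+1}-\alpha_0 s_t)(e_{t+1}-\alpha_0 e_t)]=2c\,(a_e-\alpha_0)\det\Sigma/\Sigma_{11}$, with $\alpha_0=a_s+c\Sigma_{12}/\Sigma_{11}$. For the fixed parameters, $\alpha_0\le 0.966<a_e$ whenever $|c|\le 1$. So this derivative is nonzero and has the sign of $c$; for small $c$ it is about $4.7c$. Since $R(\cdot;c)$ is smooth and $\pi$-periodic on the circle, $\theta=0$ is neither a local nor a global minimizer for any $c\in(0,1)$. Hence (i) fails for every choice of $c^*>0$.

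Conjugating by $\mathrm{diag}(1,-1)$ gives $R(\theta;c)=R(-\theta;-c)$. So the $43.7^\circ$ minimizer at $c=-0.9$ becomes $-43.7^\circ\equiv 136.3^\circ$ at $c=+0.9$, and (iii) fails too. Your ``sign of the cross term'' argument actually pushes the minimizer into $(\pi/2,\pi)$ for $c>0$, not into $(0,\pi/2)$.

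In fact there is no bifurcation at all. At $c=0$, $\theta=0$ is the unique, nondegenerate minimizer. The implicit-function theorem with $\partial_c\partial_\theta R(0;0)=2(a_e-a_s)\Sigma_{22}\neq 0$ gives a smooth branch $\theta^*(c)\approx-1.05\,c$ that leaves the NZ axis immediately. Only (ii) survives. The function $g$ is even in $c$, with $g(0)=2[(1+a_s^2-2a_sa_e)\Sigma_{22}-q_s]\approx 4.47$ and $g(\pm0.9)\approx-0.17$. So your intermediate-value step does produce a zero of $g$ in $(0,0.9)$, but that zero says nothing about where the minimizer is.

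Your fallback of minimizing over $[0,\pi/2]$ does not repair this. On that half-range, $\partial_\theta R(0;c)>0$ makes $\theta=0$ a strict one-sided local minimum for every $c\in(0,1)$, whatever the sign of $g(c)$. So the curvature condition cannot mark any transition, and the constrained minimizer can leave $0$ only by a jump. At $c=0.9$ the constrained minimizer is the endpoint $\pi/2$, with value $q_e=0.1$, not a point of the open interval. This holds because $R(\theta;0.9)=R(\pi-\theta;-0.9)$, and $R(\cdot;-0.9)$ rises from $0.1$ at $\pi/2$ to about $0.174$ at $\pi$.

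The paper's own proof has the same defect. It reads the sign of $\partial^2_\theta R|_{\theta=0}$ as if $\theta=0$ were a critical point. Working from $c=-0.9$, it also places $c^*$ in $(-0.90,0)$, which contradicts the stated range $(0,1)$ and the main text's observation that NZ is suboptimal for every nonzero coupling. The right outcome of the first computation you propose, $\partial_\theta R|_{\theta=0}$, is a disproof of (i) and (iii), not a reinterpretation of them.
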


\begin{proof}
$R(\theta; c)$ is jointly smooth in $(\theta, c)$. At $c = 0$, direct computation gives $\partial^2 R/\partial\theta^2|_{\theta=0} > 0$ (NZ is a local minimum). At $c = -0.90$, Proposition~\ref{prop:interior} gives $\theta^* \approx 43.7^\circ$, and numerical evaluation shows $\partial^2 R/\partial\theta^2|_{\theta=0} < 0$ (NZ is a local maximum). By continuity, there exists $c^* \in (-0.90, 0)$ where the second derivative vanishes. The implicit-function theorem guarantees smoothness of $\theta^*(c)$ for $c > c^*$.
\end{proof}

\subsection{Physical interpretation}

The bifurcation structure has a simple physical meaning. At weak negative coupling ($|c| < |c^*|$), the system mode is the dominant source of predictability because it decays faster and therefore retains more ``signal'' relative to noise. As the negative coupling strengthens ($c < c^*$), the environment's inhibitory effect on the system grows: the encoder must mix in a substantial fraction of the environment coordinate in order to extract the most predictable linear projection of the composite state. The transition at $c^*$ is the point where the predictability trade-off tips from system-dominated to environment-dominated.

%% ═══════════════════════════════════════════════════════════════════════════
\section{Capacity limit: the Bayes-optimal encoder is non-NZ}\label{sec:capacity-limit}
%% ═══════════════════════════════════════════════════════════════════════════

Corollary~1 states that enlarging the model class cannot move the predictive optimum back to the NZ encoder. Here we strengthen this to the infinite-capacity limit.

Consider the Bayes-optimal one-dimensional linear encoder for predicting the full state $\bm{x}_{t+1}$ from $\bm{x}_t$. Given $y_t = \bm{w}^\top \bm{x}_t$, the optimal linear predictor of $\bm{x}_{t+1}$ is $\hat{\bm{x}}_{t+1} = A\Sigma\bm{w}\,(\bm{w}^\top\Sigma\bm{w})^{-1}\,y_t$. The total prediction MSE is
\begin{equation}
  \mathrm{MSE}(\bm{w}) = \tr(\Sigma) - \frac{\bm{w}^\top \Sigma A^\top A \Sigma \bm{w}}{\bm{w}^\top \Sigma \bm{w}}.
\end{equation}
Minimizing $\mathrm{MSE}(\bm{w})$ over unit-norm encoders is equivalent to the generalized eigenvalue problem
\begin{equation}
  \Sigma A^\top A \Sigma \,\bm{w} = \lambda \,\Sigma \,\bm{w}.
\end{equation}
With the change of variables $\bm{u} = \Sigma^{1/2}\bm{w}$ this becomes the standard eigenvalue problem $M\bm{u} = \lambda\bm{u}$ with $M = \Sigma^{1/2} A^\top A \Sigma^{1/2}$. The Bayes-optimal encoder is $\bm{w}_{\mathrm{Bayes}} = \Sigma^{-1/2}\bm{u}_1$, where $\bm{u}_1$ is the leading eigenvector of $M$.

\begin{proposition}[Bayes optimum is non-NZ]\label{prop:bayes}
For the dynamics of Lemma~\ref{lem:counterexample} with $c=-0.90$, the Bayes-optimal one-dimensional linear encoder satisfies $|\langle \bm{w}_{\mathrm{Bayes}}, \bm{e} \rangle| > |\langle \bm{w}_{\mathrm{Bayes}}, \bm{s} \rangle|$ and $R(\bm{w}_{\mathrm{Bayes}}) < R(\bm{w}_{\mathrm{NZ}})$.
\end{proposition}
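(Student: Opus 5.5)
The plan is a direct finite computation in dimension two. It uses the closed-form stationary covariance from Lemma~\ref{lem:counterexample}, $\Sigma_{11}\approx 2.312$, $\Sigma_{12}\approx -2.342$, $\Sigma_{22}\approx 2.525$, together with $A=\bigl(\begin{smallmatrix}0.05 & -0.90\\ 0 & 0.98\end{smallmatrix}\bigr)$. I will avoid forming $\Sigma^{1/2}$. The objective $\bm{w}^\top\Sigma A^\top A\Sigma\bm{w}/\bm{w}^\top\Sigma\bm{w}$ equals $\|A\Sigma\bm{w}\|^2/\bm{w}^\top\Sigma\bm{w}$, so the maximizer is the top generalized eigenvector of the $2\times 2$ pencil $(\Sigma A^\top A\Sigma,\Sigma)$. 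This is the same vector as $\Sigma^{-1/2}\bm{u}_1$ after normalization. I will therefore compute the $2\times2$ matrix $B=\Sigma A^\top A\Sigma$ explicitly, which reduces to first computing $A\Sigma$. I will then solve the characteristic quadratic $\det(B-\lambda\Sigma)=0$ in closed form, take the larger root $\lambda_1$, and read off $\bm{w}_{\mathrm{Bayes}}\propto\bigl(-(B_{12}-\lambda_1\Sigma_{12}),\,B_{11}-\lambda_1\Sigma_{11}\bigr)^\top$ from the kernel of $B-\lambda_1\Sigma$.

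For the first claim, I will compare $|w_e|$ and $|w_s|$ through the sign of $(B_{11}-\lambda_1\Sigma_{11})^2-(B_{12}-\lambda_1\Sigma_{12})^2$. For the second claim, I will substitute the normalized $\bm{w}_{\mathrm{Bayes}}$ into~\eqref{eq:mse-latent} and compare the result with $R(\bm{w}_{\mathrm{NZ}})\approx 0.174$. Alternatively, I can use that $R(\bm{w}_{\mathrm{Bayes}})\ge R(\bm{w}^*)\approx0.074$ and simply evaluate $R$ at the computed angle. I will also record the angle $\theta_{\mathrm{Bayes}}$, to compare it with $\theta^*\approx 43.7^\circ$ from Proposition~\ref{prop:interior}.

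The main obstacle is the first inequality, because it is numerically delicate. A rough check shows that $A\Sigma$ has nearly proportional rows, approximately $(2.22,-2.39)$ and $(-2.30,2.47)$. The maximizing direction is therefore close to the large-variance eigenvector of $\Sigma$, which lies near the anti-diagonal $(1,-1)/\sqrt2$. On this direction $|w_s|\approx|w_e|$, and the three-digit entries of $\Sigma$ may not decide which component wins. Two things can go wrong. Since $\lambda_1$ is a large root, rounding errors in $\Sigma$ are amplified in $B-\lambda_1\Sigma$. Also, which component is larger is decided by a difference of nearly equal quantities.

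My first attempt will avoid the rounded values entirely. I will carry exact rational expressions for $\Sigma$ from~\eqref{eq:Sigma22}--\eqref{eq:Sigma11} and certify the sign of the deciding quantity with interval arithmetic, or with exact rational arithmetic in the quadratic's discriminant. If the certified sign comes out opposite, the honest fallback is to weaken the statement: either report the Bayes direction and only prove $R(\bm{w}_{\mathrm{Bayes}})<R(\bm{w}_{\mathrm{NZ}})$, which is robust, or pass to a nearby parameter point in the open set of Proposition~\ref{prop:measure} where the environment component dominates by a clear margin.
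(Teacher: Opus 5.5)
Your plan is correct and is essentially the paper's own argument. The paper also just diagonalizes the $2\times2$ problem numerically, via $M=\Sigma^{1/2}A^\top A\Sigma^{1/2}$ and $\bm{w}_{\mathrm{Bayes}}=\Sigma^{-1/2}\bm{u}_1$, which is equivalent to your pencil $(\Sigma A^\top A\Sigma,\Sigma)$, i.e.\ to the top eigenvector of $A^\top A\Sigma$.

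The numerical delicacy you anticipate does not occur. Your heuristic fails because the anti-diagonal also maximizes the denominator $\bm{w}^\top\Sigma\bm{w}$. Since the first row of $A^\top A\Sigma$ is $a_s$ times the first row of $A\Sigma$, the eigen-equation gives $w_s/w_e=a_s(A\Sigma)_{12}/(\lambda_1-a_s(A\Sigma)_{11})\approx-0.119/4.58\approx-0.026$, using $\lambda_1\ge\|A\Sigma(0,1)^\top\|^2/\Sigma_{22}\approx4.69$. So $\bm{w}_{\mathrm{Bayes}}$ lies about $1.5^\circ$ from the environment axis, not near the anti-diagonal, and $R(\bm{w}_{\mathrm{Bayes}})\approx0.100<0.174$. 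Three-digit values already settle both claims, so you need neither interval arithmetic nor the fallback.
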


\begin{proof}
Numerical evaluation of the $2\times 2$ matrices $\Sigma$ and $M = \Sigma^{1/2}A^\top A\Sigma^{1/2}$ for the parameters of Lemma~\ref{lem:counterexample} gives a leading eigenvector $\bm{u}_1$ whose preimage $\bm{w}_{\mathrm{Bayes}} = \Sigma^{-1/2}\bm{u}_1$ has larger projection onto the environment axis than onto the system axis. The corresponding latent risk $R(\bm{w}_{\mathrm{Bayes}})$, evaluated via~\eqref{eq:mse-latent}, is strictly below $R(\bm{w}_{\mathrm{NZ}})$. (The full $2\times 2$ diagonalization is carried out in the derivation file \texttt{P4\_S7\_bayes\_optimal\_complete\_derivation.md}.)
\end{proof}

Proposition~\ref{prop:bayes} refines Corollary~1 quantitatively: the infinite-capacity limit not only fails to regularize toward causality, but moves the optimum further away from the NZ encoder than the restricted linear class does. The non-causal bias strengthens, rather than weakens, as representational power increases.

%% ═══════════════════════════════════════════════════════════════════════════
\section{Neural network encoder sweep: experimental details}\label{sec:nn-details}
%% ═══════════════════════════════════════════════════════════════════════════

\subsection{Parameter space}

The parameterized family of two-dimensional linear-Gaussian dynamics is defined by Eq.~\eqref{eq:A-uppertri} with diagonal noise $Q = \mathrm{diag}(q_s, q_e)$. The sweep axes are:
\begin{itemize}
  \item $a_s \in \{0.01, 0.05, 0.1, 0.3, 0.5, 0.7, 0.9\}$ (7 values),
  \item $a_e \in \{0.3, 0.5, 0.7, 0.8, 0.9, 0.95, 0.98\}$ (7 values),
  \item $c \in \{-0.95, -0.8, -0.5, -0.3, -0.1, 0.1, 0.3, 0.5, 0.8, 0.95\}$ (10 values),
  \item $\epsilon \equiv q_s/q_e \in \{0.05, 0.2, 0.5, 1.0, 2.0\}$ (5 values, with $q_e=0.10$ fixed),
\end{itemize}
with the stability constraint $\max(|a_s|, |a_e|) < 1$ enforced on all configurations. The Cartesian product yields $7 \times 7 \times 10 \times 5 = 2450$ nominal configurations, of which 539 satisfy $|c| < 1 - \max(|a_s|, |a_e|)$ (sufficient condition for $\rho(A) < 1$). Five random seeds per configuration give 2695 independent training runs.

\subsection{Architecture and training}

The encoder is a two-layer MLP with 64 hidden units and ReLU activation, mapping $(s_t, e_t) \in \R^2 \to y_t \in \R$. The predictor is a scalar $\alpha \in \R$ optimized jointly with the encoder to minimize the latent self-prediction MSE, Eq.~\eqref{eq:mse-latent}. Training uses Adam with learning rate $10^{-3}$ for 2000 epochs on 5000 trajectories of length 20 generated from the stationary distribution. Each trajectory is split 80/20 into training and validation; the encoder with lowest validation MSE across epochs is retained.

\subsection{Causal fidelity metric}

For an encoder $\phi : \R^2 \to \R$, causal fidelity is defined as
\begin{equation}
  f_{\mathrm{causal}}(\phi) = \frac{|\partial\phi/\partial s|}{|\partial\phi/\partial s| + |\partial\phi/\partial e|},
\end{equation}
estimated via centered finite differences with step size $10^{-4}$ at 1000 randomly sampled test points. A fidelity of $1.0$ corresponds to an encoder that depends only on $s$ (the causal representation); a fidelity of $0.0$ corresponds to an encoder that depends only on $e$. Values near $0.5$ indicate roughly equal sensitivity to both coordinates.

\subsection{Baseline comparison}

For each configuration, we compute the optimal linear encoder risk $R^*_{\mathrm{lin}} = \min_{\|\bm{w}\|=1} R(\bm{w})$ via angular grid search (4001 points) and the NZ encoder risk $R_{\mathrm{NZ}}$ from Eq.~\eqref{eq:R-NZ}. The NN encoder achieves a mean risk ratio $R_{\mathrm{NN}} / R^*_{\mathrm{lin}} = 0.007$, corresponding to a $99.3\%$ improvement in prediction error over the best linear encoder.

\subsection{Full distribution}

Table~\ref{tab:fidelity-distribution} summarizes the fidelity distribution across all 2695 training runs. Table~\ref{tab:fidelity-best-worst} lists the configurations with the highest and lowest mean fidelity.

\begin{table}[h]
  \centering
  \caption{Causal fidelity distribution across 2695 NN training runs.}
  \label{tab:fidelity-distribution}
  \begin{tabular}{@{}lr@{}}
    \toprule
    Statistic & Value \\
    \midrule
    Mean & 0.490 \\
    Median & 0.478 \\
    Std.\ dev. & 0.095 \\
    Minimum & 0.082 \\
    Maximum & 0.910 \\
    \midrule
    Fraction $> 0.90$ & 0.4\% \\
    Fraction $> 0.80$ & 1.4\% \\
    Fraction $> 0.70$ & 2.5\% \\
    Fraction $< 0.50$ & 63.5\% \\
    Fraction $< 0.30$ & 0.6\% \\
    \bottomrule
  \end{tabular}
\end{table}

\begin{table}[h]
  \centering
  \caption{Configurations with highest and lowest mean fidelity (across 5 seeds).}
  \label{tab:fidelity-best-worst}
  \begin{tabular}{@{}cccccl@{}}
    \toprule
    $a_s$ & $a_e$ & $c$ & $\epsilon$ & Mean fid. & Std.\ fid. \\
    \midrule
    \multicolumn{6}{c}{\textit{Highest fidelity}} \\
    0.90 & 0.95 & $-0.95$ & 0.05 & 0.786 & 0.106 \\
    0.90 & 0.95 & $-0.80$ & 0.05 & 0.785 & 0.107 \\
    0.90 & 0.95 & $-0.50$ & 0.05 & 0.782 & 0.108 \\
    \midrule
    \multicolumn{6}{c}{\textit{Lowest fidelity}} \\
    0.30 & 0.98 & $+0.98$ & 0.05 & 0.375 & 0.153 \\
    0.10 & 0.98 & $+0.98$ & 0.05 & 0.389 & 0.146 \\
    0.01 & 0.98 & $+0.98$ & 0.05 & 0.390 & 0.167 \\
    \bottomrule
  \end{tabular}
\end{table}

%% ═══════════════════════════════════════════════════════════════════════════
\section{High-dimensional extension}\label{sec:highdim}
%% ═══════════════════════════════════════════════════════════════════════════

\subsection{Construction}

We extend the two-dimensional dynamics to $N$ environment dimensions with the block structure
\begin{equation}
  A = \begin{pmatrix} a_s & \bm{c}^\top \\ \bm{0} & A_e \end{pmatrix}, \qquad
  Q = \begin{pmatrix} q_s & \bm{0}^\top \\ \bm{0} & Q_e \end{pmatrix},
\end{equation}
where $A_e = \mathrm{diag}(a_{e,1}, \ldots, a_{e,N})$ with $a_{e,i}$ uniformly spaced in $[0.3, 0.98]$, $Q_e = q_e I_N$, and $\bm{c} = c \cdot (1, \ldots, 1)^\top / \sqrt{N}$. The system dynamics are $a_s = 0.05$, $q_s = 0.05$, $q_e = 0.10$. We sweep $c \in \{-0.95, -0.50, -0.10, 0.10, 0.50, 0.95\}$ and $q_s \in \{0.01, 0.05\}$, giving 12 configurations per $N$, for $N \in \{10, 50, 100\}$.

\subsection{Optimization}

For each $(N, a_s, q_s, c, q_e)$ we minimize the predictive risk $R(\bm{w})$ over $N+1$-dimensional unit-norm encoders using SLSQP with 500 random initializations. The stationary covariance $\Sigma$ is computed via the discrete Lyapunov equation using the Bartels--Stewart algorithm. The NZ encoder is $\bm{w}_{\mathrm{NZ}} = (1, 0, \ldots, 0)^\top$.

\subsection{Results}

Table~\ref{tab:highdim-results} summarizes the high-dimensional scaling results. The predictive-causal gap $R_{\mathrm{NZ}} - R^*$ grows monotonically with $N$, while the causal fidelity $|w^*_s| / (|w^*_s| + \|\bm{w}^*_e\|)$ remains at $\sim 10^{-8}$ across all dimensions tested.

\begin{table}[h]
  \centering
  \caption{High-dimensional scaling results (means across 12 configurations).}
  \label{tab:highdim-results}
  \begin{tabular}{@{}lcccc@{}}
    \toprule
    $N$ & Gap $R_{\mathrm{NZ}} - R^*$ & Improvement & Causal fidelity & $|w^*_s|$ \\
    \midrule
    10 & 0.601 $\pm$ 0.074 & 85.6\% & $3.0 \times 10^{-8}$ & $8.6 \times 10^{-9}$ \\
    50 & 0.852 $\pm$ 0.067 & 89.4\% & $1.6 \times 10^{-8}$ & $1.6 \times 10^{-8}$ \\
    100 & 1.169 $\pm$ 0.061 & 92.1\% & $1.8 \times 10^{-8}$ & $1.1 \times 10^{-8}$ \\
    \bottomrule
  \end{tabular}
\end{table}

%% ═══════════════════════════════════════════════════════════════════════════
\section{Nonlinear experiment: additional details}\label{sec:nonlinear-details-extra}
%% ═══════════════════════════════════════════════════════════════════════════

The nonlinear experiment uses a single-layer GRU with 32 hidden units, input dimension 2 (observing both $s$ and $e$), output dimension 1 (grounded) or 2 (unconstrained). Training uses Adam with learning rate $10^{-3}$ for 60 epochs on sliding windows of length 20 from 40 trajectories of 80 steps each. Trajectories are split 80/20 into training and validation; metrics are computed on 200 held-out trajectories generated with new random seeds. The Duffing parameters are fixed at $\alpha_s = 0.5$, $\beta_s = 1.0$, $\sigma_s = 0.3$, $\sigma_e = 0.2$, with integration at $\Delta t = 0.05$.

The binary environment-dominance criterion uses a strict threshold of $1.00$ on the ratio $\max_j|r(h_j,e)| / \max_j|r(h_j,s)|$. The empirical distribution of this ratio is concentrated within $[0.99, 1.05]$ for both training conditions; raising the threshold to $1.05$ yields only $1/51$ environment-dominant unconstrained models and $0/49$ grounded models. The encoder-level signal is therefore weak, and the OOD-degradation metric provides the more robust empirical signature. A two-sided Mann--Whitney $U$ test on the OOD/ID MSE inflation across the 100 tasks gives $p < 10^{-15}$.

\end{document}